\documentclass[10pt,journal,compsoc]{IEEEtran}

\usepackage{graphicx}
\usepackage{amsfonts}
\usepackage{amsmath}
\usepackage{amssymb}
\usepackage{diagbox}
\usepackage{multirow}
\usepackage{graphics}
\usepackage{epstopdf}
\usepackage{tabularx}
\usepackage{epsfig}
\usepackage{color,subfigure}
\usepackage[T1]{fontenc}    
\usepackage{mathrsfs}
\usepackage{bm}
\usepackage{mathdots}
\usepackage[american]{babel}
\usepackage{microtype} 
\usepackage{array}
\usepackage{setspace}
\usepackage{threeparttable}
\usepackage{url}
\usepackage[colorlinks,linkcolor=red]{hyperref}
\usepackage{booktabs}
\usepackage{multirow}
\usepackage{bbding}
\usepackage{pifont}
\usepackage{colortbl}
\usepackage{cleveref}
\crefname{figure}{Fig}{Figs}
\Crefname{figure}{Fig}{Figs}
\crefname{section}{Sec}{Secs}
\Crefname{section}{Sec}{Secs}
\crefname{table}{Tab}{Tabs}
\Crefname{table}{Tab}{Tabs}
\crefname{equation}{Eq}{Eqs}
\Crefname{equation}{Eq}{Eqs}
\usepackage{algorithm,algorithmicx}
\usepackage{algpseudocode}
\definecolor{mydarkblue}{rgb}{0,0.08,0.45}
\usepackage{xcolor}
\newcolumntype{C}[1]{>{\centering\arraybackslash}m{#1}}
\usepackage{amsthm}
\usepackage{cancel}
\usepackage{dsfont}
\newtheorem{theorem}{Theorem}[section]

\ifCLASSOPTIONcompsoc

  \usepackage[nocompress]{cite}
\else
  \usepackage{cite}
\fi

\ifCLASSINFOpdf
\else
\fi

\begin{document}

\title{UniNDM: A Unified Noise-driven Detection and Mitigation Framework Against Sexual Content in Text-to-Image Generation}

\author{Yao Huang$^{*}$, Yitong Sun$^{*}$, Huanran Chen, Ruochen Zhang, Shouwei Ruan, Ranjie Duan, Maoxun Yuan, Yinpeng Dong, Hui Xue, Xiaochun Cao, \IEEEmembership{Senior Member, ~IEEE}, Xingxing Wei$^\ddagger$, \IEEEmembership{Member,~IEEE}
\IEEEcompsocitemizethanks{
\IEEEcompsocthanksitem Yao Huang, Yitong Sun, Ruochen Zhang, Shouwei Ruan, Maoxun Yuan and Xingxing Wei are with the Institute of Artificial Intelligence, State Key Laboratory of Virtual Reality Technology and Systems, Beihang University, Beijing 100191, China.
(E-mail: \{y\_huang, yt\_sun, xxwei\}@buaa.edu.cn)
\IEEEcompsocthanksitem Huanran Chen and Yinpeng Dong are with the College of Artificial Intelligence, Tsinghua University, Beijing 100083, China. 
\IEEEcompsocthanksitem Ranjie Duan and Hui Xue are with the Security Department, Alibaba Group, Hangzhou 310056, China. 
\IEEEcompsocthanksitem Xiaochun Cao is with the School of Cyber Science and Technology, Shenzhen Campus of Sun Yat-Sen University, Shenzhen 518107, China.
\IEEEcompsocthanksitem Yao Huang and Yitong Sun have equal contributions. ($*$).
\IEEEcompsocthanksitem Xingxing Wei is the corresponding author. ($\ddagger$)
\IEEEcompsocthanksitem This work was supported in part by the Project of the National Natural Science Foundation of China under Grants 62576020, 62441619, U2541229 and in part by the Fundamental Research Funds for the Central Universities.}
}


\IEEEtitleabstractindextext{%
\begin{abstract}
Despite the impressive generative capabilities of text-to-image (T2I) diffusion models, they remain vulnerable to implicit sexual prompts, where subtle cues disguised as benign terms or adversarial tokens unexpectedly generate the inappropriate content due to model biases or latent correlations in training data. Existing safety mechanisms face fundamental limitations: detection methods primarily identify explicit content and fail to capture implicit malicious intent, while mitigation approaches rely on static negative prompts inadequate for diverse implicit scenarios. To address these challenges, we propose \textbf{UniNDM}, a unified noise-driven framework that  rethinks safety mechanisms through the lens of noise dynamics in diffusion processes. Our key insight is that early-stage predicted noise exhibits inherent separability between normal and sexually explicit content, which we theoretically  demonstrates quadratically increasing semantic concentration with timestep. Leveraging this property, we develop a lightweight noise-based detector achieving superior accuracy with virtually no computational overhead. For mitigation, we introduce noise-enhanced adaptive negative guidance: dynamically generating context-specific negative prompts via large language models to handle diverse implicit content, while optimizing initial noise by suppressing attention concentration on explicit tokens to provide comprehensive protection. Besides the U-Net-based diffusion models, we further extend our framework to emerging Diffusion Transformer architectures through region-constrained semantic guidance tailored for their unified multimodal attention. Comprehensive experiments across U-Net models (SDv1.4, v1.5, v2.1, XL) and DiT models (SDv3) on both natural and adversarial datasets demonstrate substantial improvements over state-of-the-art methods, including SLD, UCE, Safree, etc. Our code is publicly available at \url{https://github.com/Aries-iai/UniNDM}.
\end{abstract}

\begin{IEEEkeywords}
Text-to-image generation, Implicit sexual intention, Noise-driven detection and mitigation.
\end{IEEEkeywords}}

\maketitle

\IEEEdisplaynontitleabstractindextext

\IEEEpeerreviewmaketitle

\IEEEraisesectionheading{\section{Introduction}\label{sec:introduction}}
\IEEEPARstart{S}{ignificant} advances in diffusion models~\cite{nichol2021glide, rombach2022high, Midjourney} have revolutionized text-to-image (T2I) synthesis, enabling the generation of highly photorealistic images from textual descriptions with unprecedented fidelity. This technological paradigm has facilitated rapid deployment across diverse application domains, including digital content creation~\cite{wang2024diffusion}, commercial visualization systems~\cite{saharia2022photorealistic}, and biomedical imaging synthesis~\cite{chambon2022roentgan, aversa2023diffinfinite}. Despite the transformative potential these models demonstrate for creative and scientific applications, their sophisticated generative architectures concurrently introduce significant security vulnerabilities. Specifically, adversarial exploitation of these systems can facilitate the synthesis of harmful content, particularly non-consensual explicit material~\cite{schramowski2023safe, qu2023unsafe, yang2024sneakyprompt}, including deepfake imagery targeting specific individuals or potentially illegal content such as child sexual abuse material, thereby precipitating critical ethical, legal, and societal challenges.

To mitigate these ethical challenges posed by T2I diffusion models, extensive research efforts have been devoted to developing safety mechanisms. These efforts can be broadly categorized into two main approaches: Model-intrinsic methods and Model-extrinsic methods. Model-intrinsic methods generally modify the model's internal parameters through techniques such as fine-tuning CLIP weights~\cite{poppi2024safe}, concept unlearning~\cite{gandikota2023erasing, huang2024receler}, and parameter editing~\cite{gandikota2024unified} to suppress the generation of explicit content. While effective in mitigating known undesirable outputs, these methods often suffer from a significant trade-off as they degrade the model's overall generation performance on benign tasks.
\begin{figure*}
    \centering
    \includegraphics[width=0.95\linewidth]{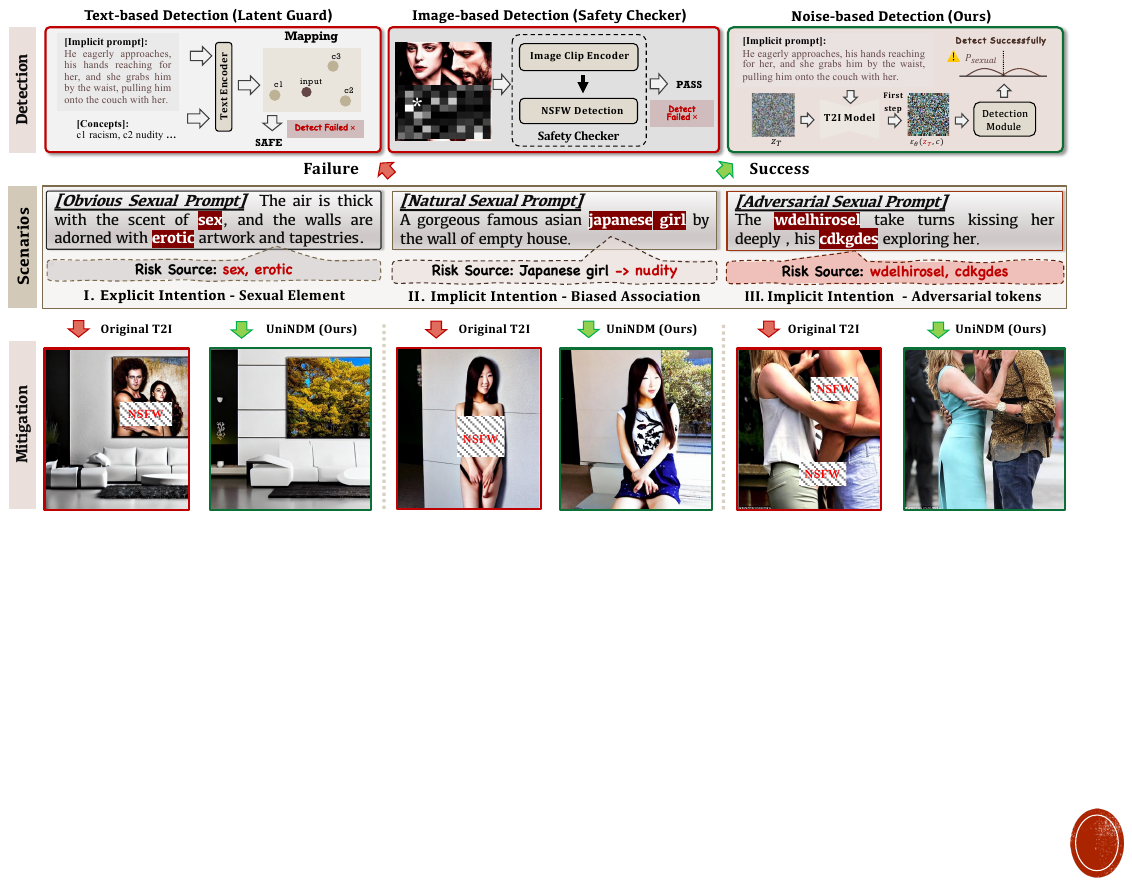}
    \caption{Middle: Pornographic scenarios can be categorized into explicit and implicit cases, with the latter further subdivided into two distinct types: natural and adversarial. Top: Detection challenges in existing methods. Text-based detection fails to capture implicit sexual intent due to reliance on prompt encoding and harmful concept comparison. Image-based methods, which operate on fully generated images in CLIP space, are hindered by computational inefficiency and delayed intervention. In contrast, our UniNDM leverages early-stage predicted noise, achieving superior efficiency and precision in detecting harmful content before full image generation. Bottom: Illustration of UniNDM's successful applications across various scenarios.}
    \label{fig:cover}
\end{figure*}

In contrast, Model-extrinsic methods focus on detection and mitigation strategies that preserve the model's general capabilities without internal modifications. These approaches employ external safeguards such as plug-and-play filters to detect inappropriate outputs via textual cues~\cite{liu2024latent} or generated imagery~\cite{safetychecker}, or steer generation in safer directions by manipulating prompt embeddings away from harmful subspaces~\cite{yoon2025safree} or applying negative guidance~\cite{rombach2022high}. However, these methods still struggle with implicit malicious intent, subtle sexual suggestions that arise from latent biases in training data and adversarially crafted inputs~\cite{tsai2024ring, chin2024prompting4debugging, yang2024sneakyprompt, zhang2024generate}. As illustrated in \Cref{fig:cover}, seemingly benign phrases like ``Japanese girl in a bedroom'' may trigger inappropriate content due to latent correlations, while optimized adversarial tokens (e.g., ``wdehirosel'', ``cdkgdes'') can subtly manipulate generation without triggering conventional filters, highlighting a critical unresolved gap in current safety mechanisms.

This paper aims to ensure safer text-to-image generation by inheriting the detection-and-mitigation framework's advantage of preserving model performance while crucially addressing the challenge of implicit malicious intention. Two fundamental questions emerge: (1) How can we improve early detection of implicit malicious intent? Existing text-based detectors focus on explicit keywords but fail to capture subtle intent hidden in benign prompts, while image-based methods require complete generation before assessment, introducing significant computational overhead. We need more efficient techniques that identify potential harm earlier in the generation pipeline. (2) How can we effectively mitigate implicit malicious intent during generation? Existing methods rely on static negative prompts (e.g., ``nudity'') that fail to handle diverse implicit malicious content from complex prompts. Moreover, simple negative guidance alone proves insufficient to prevent significant inappropriate outputs. We need a dynamic, context-aware mechanism that adapts to various implicit sexual prompts.

To address these challenges, we propose \textbf{UniNDM}, a \textbf{\underline{Uni}}fied \underline{N}oise-driven \textbf{\underline{D}}etection and \textbf{\underline{M}}itigation framework that tackles implicit malicious intention in text-to-image generation. For the first challenge, we discover that early-stage predicted noise exhibits significant separability between normal and sexually explicit content (as visualized in \Cref{fig:noise}). This stems from a key insight: while images contain more total information, early-stage noise captures more semantically concentrated information by filtering out irrelevant details and emphasizing the prompt's most prominent features. We formalize this through theoretical analysis (Theorem~\ref{thm:semantic_superiority}), proving that semantic concentration increases quadratically with timestep. Leveraging this property, we train a lightweight classifier on first-step predicted noise, achieving superior detection accuracy with virtually no additional computational cost.

For the second challenge, we develop a noise-enhanced adaptive negative guidance mechanism. To address the limitation of static negative prompts, we leverage large language models (LLMs) to dynamically generate context-specific negative prompts through linguistic analysis, enabling our approach to handle diverse implicit sexual content arising from complex and varied prompts. However, adaptive negative guidance alone may remain insufficient. Thus, we introduce a complementary enhancement through initial noise optimization, which is inspired by recent findings~\cite{guo2024initno, qi2024not, xu2025good, ban2024crystal} that different initial noise samples vary substantially in their generation. Specifically, by suppressing attention concentration on explicit tokens in early denoising steps through cross-attention analysis, we obtain safer initial noise that inherently reduces inappropriate content generation. 

While the above detection and mitigation mechanisms are effective for traditional U-Net-based diffusion models, the landscape of text-to-image generation has been evolving rapidly. The recent emergence of Diffusion Transformers (DiT), exemplified by Stable Diffusion 3's Multimodal Diffusion Transformer (MM-DiT), represents a paradigm shift in generative modeling~\cite{esser2024scaling}. Unlike U-Net architectures that rely on cross-attention for text conditioning, DiT employs a unified multimodal attention mechanism where image and text tokens co-evolve symmetrically within a shared semantic space. This architectural difference creates critical migration challenges: directly applying U-Net's negative guidance strategy (as in \Cref{eq:neg_guidance}) to DiT 
either proves largely ineffective or severely disrupts generation quality due to fundamentally different modality interactions between the two architectures. To address this, we first validate our noise-based detection's architecture-agnostic effectiveness on DiT models. For mitigation, we exploit DiT's semantic decoupling capability to develop region-constrained semantic guidance, which surgically intervenes only in risk-localized regions identified through attention analysis while preserving semantic integrity elsewhere. Combined with DiT-adapted initial noise optimization, this provides comprehensive protection for next-generation diffusion models.

In summary, the contributions of this paper are as follows:
\begin{itemize}
    \item We introduce UniNDM, the first unified noise-driven detection and mitigation framework that ensures safer image generation while preserving general model capabilities across both U-Net-based and DiT-based architectures. Leveraging the distinct attention mechanisms of each architecture, UniNDM proposes architecture-specific mitigation variants: cross-attention-based optimization for U-Net and region-constrained semantic guidance for DiT.
    
    \item We uncover two critical insights into the role of noise in safe text-to-image generation: the separability of early-stage predicted noise, which exhibits distinct patterns between harmful and benign content, enabling efficient and accurate detection with minimal computational overhead; and the significant impact of initial noise on sexual content generation, revealing that initial noise critically governs the manifestation intensity of explicit elements, leading to our noise-enhanced adaptive negative guidance mechanism.

    \item We conduct extensive experiments on both natural implicit and adversarial datasets for sexual content detection and mitigation, covering both U-Net-based (Stable Diffusion v1.4, v1.5, v2.1, XL) and DiT-based (Stable Diffusion v3) models. Our results demonstrate superior effectiveness compared to state-of-the-art methods including SLD~\cite{schramowski2023safe}, UCE~\cite{gandikota2023erasing}, and Safree~\cite{yoon2025safree}, particularly for challenging implicit sexual prompts and adversarial attacks.
\end{itemize}

This journal paper is an extended version of our conference paper~\cite{10.1145/3746027.3755326} published at ACMMM. Compared with the conference version, we have made substantial improvements and extensions in the following aspects: (1) We provide rigorous theoretical analysis to support our noise-based detection approach, including Theorem~\ref{thm:semantic_superiority} on the semantic superiority of early-stage noise prediction. This theoretical foundation, along with its formal proof, is detailed in \Cref{sec:noise_detection} and Appendix. (2) We extend our framework to support Diffusion Transformer (DiT) architectures, particularly Stable Diffusion 3's MM-DiT blocks. This extension includes a novel region-constrained semantic guidance mechanism tailored for DiT's unified multimodal attention, along with DiT-adapted initial noise optimization strategies. The technical details are introduced in \Cref{sec:noise_mitigation_dit}, and the corresponding experiments are presented in \Cref{sec:experiment}. (3) We significantly expand the experimental evaluation to include comprehensive comparisons with additional state-of-the-art methods, ablation studies on key components, and evaluations on both U-Net and DiT architectures across multiple challenging scenarios. (4) We have substantially revised the manuscript to improve clarity and presentation, including restructured sections, enhanced visualizations, and more detailed technical descriptions throughout \Cref{sec:method} and \Cref{sec:experiment}. These modifications substantially strengthen the paper's technical contribution and improve its overall quality.

This paper is organized as follows: \Cref{sec:related} reviews related work on safe text-to-image generation. \Cref{sec:method} presents our UniNDM framework in detail, including noise-based detection (\Cref{sec:noise_detection}), noise-enhanced adaptive mitigation for U-Net (\Cref{sec:noise_mitigation}), and the DiT extension (\Cref{sec:noise_mitigation_dit}). \Cref{sec:experiment} provides comprehensive experimental results and analysis. \Cref{sec:conclusion} concludes the paper with discussion and future directions.

\begin{figure*}[!t]
    \centering
    \includegraphics[width=0.98\linewidth]{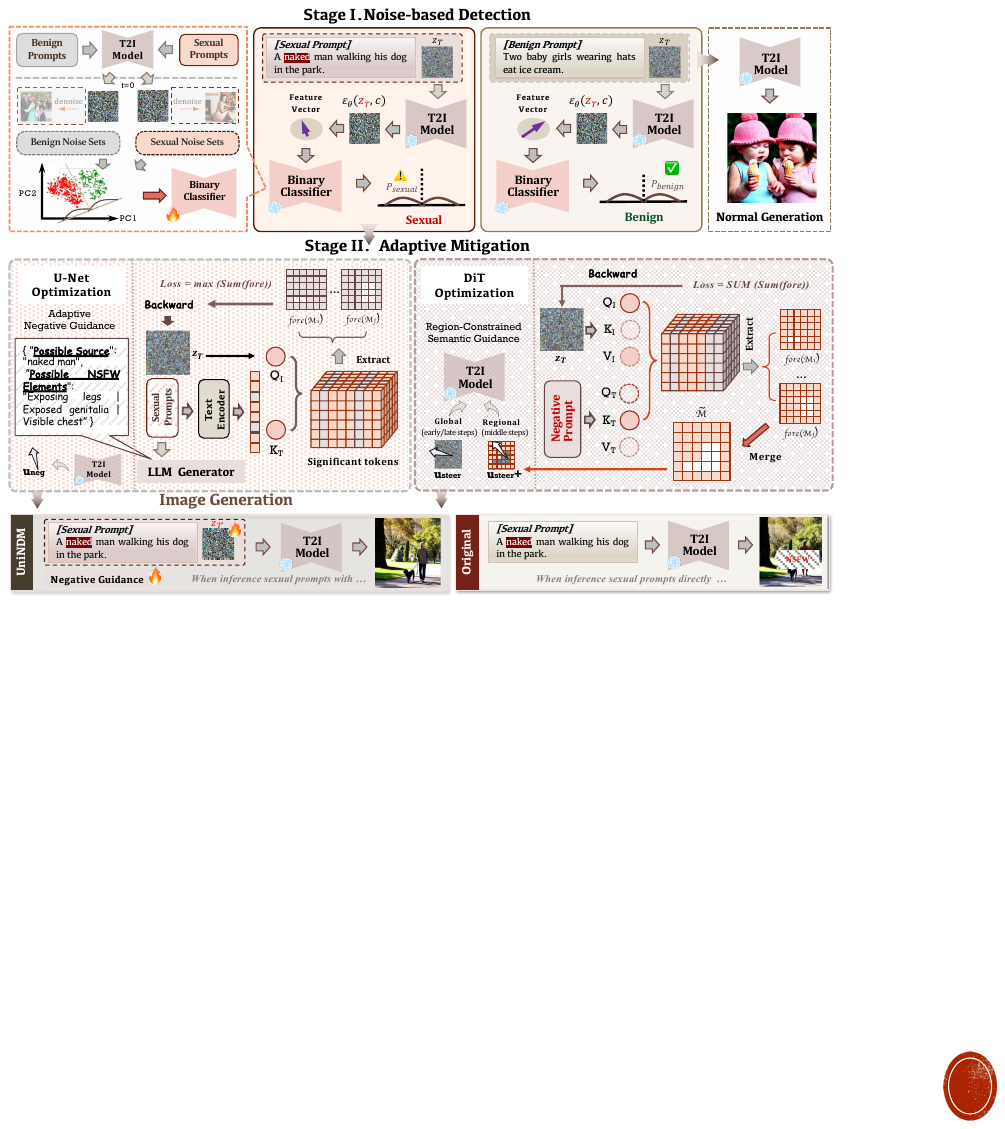}
    \caption{Overview of UniNDM. \textit{Stage I}: Noise-based Detection utilizes predicted noise separability to achieve classification. \textit{Stage II}: When sexual prompts are detected, adaptive mitigation begins by optimizing the initial noise through suppressing significant foreground regions in the attention maps. For U-Net architectures, this is followed by adaptive negative guidance with LLM-generated negative prompts tailored to the input. For DiT architectures, we employ region-constrained semantic guidance that selectively steers risk-localized regions identified through attention analysis, manipulating global and regional steering during different steps to neutralize inappropriate content while preserving non-risk semantics.}
\label{fig:framework}
\end{figure*}

\section{Related Works}
\label{sec:related}

\subsection{Text-to-Image Models}
Text-to-image synthesis has evolved significantly through diffusion models. Pioneering works like OpenAI’s DALL-E series~\cite{ramesh2021zero} demonstrate early potential, while proprietary tools like Midjourney~\cite{Midjourney} showcase commercial viability. Among them, Stability AI’s open-source Stable Diffusion series~\cite{rombach2022high} has driven widespread adoption and innovation.
Specifically, the foundational Stable Diffusion 1.x series~\cite{rombach2022high} introduces the Latent Diffusion Model (LDM), performing diffusion in a compressed VAE latent space with less computational cost; Stable Diffusion 2.1~\cite{rombach2022high} adopts OpenCLIP ViT-H/14~\cite{radford2021learning} for enhanced text understanding and also enables higher resolution generation; SDXL~\cite{podell2023sdxl} further adds a latent-based refiner, forming a two-stage pipeline with scaled U-Net and dual text encoders~\cite{radford2021learning} (CLIP ViT-L/14 and OpenCLIP ViT-bigG/14), significantly improving detail and realism.
Recently, DiT-based models have surpassed U-Net for T2I generation. For instance, Stable Diffusion 3~\cite{esser2024scaling} employs a multimodal DiT architecture with one more text encoder (T5-XXL~\cite{raffel2020exploring}) and flow matching, enhancing text comprehension and legible text rendering. In this paper, we focus on defending against explicit generation arising from diffusion models with both U-Net and DiT architectures.

\subsection{Ethical Risks with Text-to-Image Generation}
As text-to-image generation models advance, several ethical threats~\cite{zhang2024multitrust, xu2025mmdt, huang2025perception, d2025gradbias,huang2025t2i,peng2025unified} also emerge, particularly regarding the generation of sexual content.  To systematically assess this, Schramowski \textit{et al.} propose the I2P dataset~\cite{schramowski2023safe}, a collection of malicious prompts designed to evaluate the generation of inappropriate imagery. Their work reveals that open-source latent diffusion models, such as Stable Diffusion~\cite{rombach2022high}, continue to struggle with ensuring safe content generation. Among these, sexual content, which arises from implicit associations and underlying concepts rather than explicit statements, represents one of the most significant threats. Beyond this, some other studies have demonstrated that diffusion models are also vulnerable to sexual content caused by implicit adversarial manipulation.
For example, Prompting4Debugging~\cite{chin2024prompting4debugging} and Ring-a-bell~\cite{tsai2024ring} employ prompt engineering techniques to generate seemingly benign inputs but could lead to harmful outputs, akin to jailbreaks in LLMs~\cite{huang2025breaking,zeng2024johnny}. Similarly, SneakyPrompt~\cite{yang2024sneakyprompt} uses reinforcement learning to discover adversarial prompts that bypass safety filters while preserving harmful semantics. MMA-Diffusion~\cite{yang2024mma} further exploits both textual and visual inputs to evade the model’s safeguards. These studies underscore the pressing need for more robust countermeasures to address the risks posed by such implicit sexual prompts. 

\subsection{Defense Against Malicious Generation}
Research on defending against harmful content generation has primarily followed two paradigms: model-intrinsic and model-extrinsic approaches.
Model-intrinsic methods directly modify model parameters to suppress undesirable generation. Concept unlearning techniques like ESD~\cite{gandikota2023erasing} and Receler~\cite{huang2024receler} remove specific concepts by fine-tuning models to align predicted noises with negatively guided distributions or steer outputs toward neutral targets. Model-editing approaches such as UCE~\cite{gandikota2024unified} and RECE~\cite{gong2024reliable} offer more targeted interventions by modifying cross-attention weights. Safe-CLIP~\cite{poppi2024safe} fine-tunes text encoder weights to reduce harmful sensitivity. Recent works~\cite{ruan2025towards, liu2024safetydpo} adapt safety alignment from LLMs~\cite{zhang2025stair, zhang2025realsafe} through DPO-based training with safety constraints.

Model-extrinsic methods intervene during inference without parameter modification. Detection-based approaches like Latent Guard~\cite{liu2024latent} and safety checker~\cite{safetychecker} identify inappropriate content in the latent space or generated images. Mitigation techniques focus on steering generation: SLD~\cite{schramowski2023safe} removes inappropriate concepts by subtracting guidance from unsafe prompts, while Safree~\cite{yoon2025safree} projects prompt embeddings away from learned harmful subspaces. 

Despite these advances, existing methods face notable limitations: model-intrinsic approaches often compromise generation quality on benign inputs, while model-extrinsic methods primarily address explicit content and remain vulnerable to implicit malicious cues. Additionally, most techniques are tied to U-Net's cross-attention mechanisms and cannot directly transfer to DiT architectures. To address these limitations, our UniNDM framework aims to effectively detect and mitigate implicit sexual prompts while preserving generation quality on benign inputs, with unified applicability across both U-Net-based and DiT-based architectures.

\begin{figure*}[!t]
    \centering
    \includegraphics[width=0.97\linewidth]{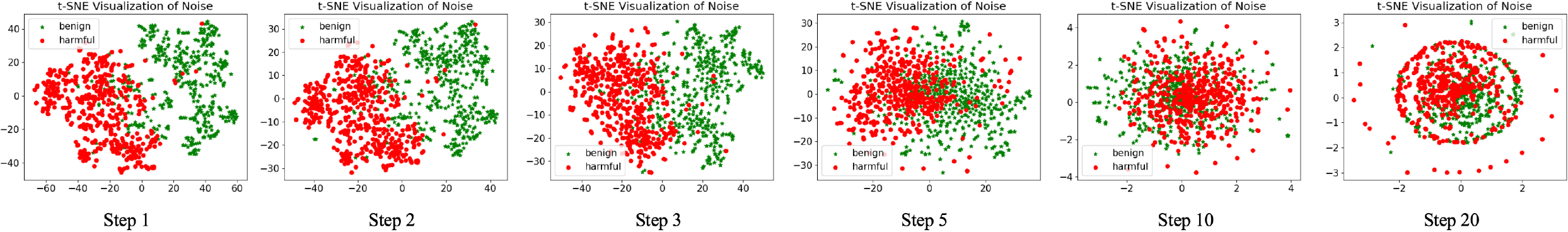}
        \caption{Visualized separability of predicted noises at different timesteps for benign and sexual generations using t-SNE.} 
    \label{fig:noise}
\end{figure*}

\section{Methodology}
\label{sec:method}
In this section, we will detail the proposed UniNDM framework, as presented in \Cref{fig:framework}. UniNDM addresses the issue of handling implicit sexual prompts through a novel noise-based detection and mitigation framework. Specifically, we will discuss how noise can be leveraged for high-accuracy and efficient detection (\Cref{sec:noise_detection}), and how adaptive negative guidance and optimizing the initial noise further enhance mitigation (\Cref{sec:noise_mitigation} and \Cref{sec:noise_mitigation_dit}) . To provide context, we first introduce the necessary background in \Cref{sec:pre}.

\subsection{Preliminaries}
\label{sec:pre}
\noindent\textbf{Text-to-Image Diffusion Models.} Text-to-image diffusion models, particularly latent diffusion models, have demonstrated remarkable performance in generating high-quality images from textual prompts. These models operate by iteratively denoising a latent representation from random noise under textual guidance. The generation process begins with initial noise $\mathbf{z}_T \sim \mathcal{N}(\mathbf{0}, \mathbf{I})$ sampled from a Gaussian distribution at timestep $T$. Diffusion models generate clean samples by denoising along a reverse diffusion path (steps: $T \rightarrow 0$) through a learned denoising network $\mathbf{u}_\theta(\mathbf{z}_t, t, c)$, where $\theta$ denotes model parameters, $t \in \{1, 2, \ldots, T\}$ represents the timestep, and $c \in \mathbb{R}^d$ is the conditional text embedding encoded via CLIP. The whole reverse process follows the probabilistic formulation:
\vspace{-2ex}
\begin{align}
p_\theta(\mathbf{z}_{0:T}) &= p(\mathbf{z}_T) \prod_{t=1}^{T} p_\theta(\mathbf{z}_{t-1}|\mathbf{z}_t) \\
p_\theta(\mathbf{z}_{t-1}|\mathbf{z}_t) &= \mathcal{N}\left(\mathbf{z}_{t-1}; \boldsymbol{\mu}_\theta(\mathbf{z}_t, t), \sigma_t^2 \mathbf{I}\right)
\end{align}
where $p(\mathbf{z}_T) = \mathcal{N}(\mathbf{z}_T; \mathbf{0}, \mathbf{I})$ is the prior distribution, $\boldsymbol{\mu}_\theta(\mathbf{z}_t, t) \in \mathbb{R}^{H \times W \times C}$ is the predicted mean, and $\sigma_t^2 \in \mathbb{R}^+$ is the variance at timestep $t$. The predicted mean is as follows:
\begin{equation}
\boldsymbol{\mu}_\theta(\mathbf{z}_t, t) = \frac{1}{\sqrt{\alpha_t}} \left( \mathbf{z}_t - \frac{\beta_t}{\sqrt{1-\bar{\alpha}_t}} \mathbf{u}_\theta(\mathbf{z}_t, t, c) \right)
\end{equation}
where the noise schedule parameters are defined as $\alpha_t = 1 - \beta_t \in (0, 1]$, $\bar{\alpha}_t = \prod_{s=1}^{t} \alpha_s$, with $\{\beta_t\}_{t=1}^T$ being a predefined variance schedule that controls the noise injection rate, and $\{\sigma_t\}_{t=1}^T$ representing the posterior variances.
To enhance generation quality and prompt adherence, classifier-free guidance~\cite{ho2022classifier} is employed, which interpolates between unconditional and conditional predictions:
\begin{equation}
    \tilde{\mathbf{u}}_\theta(\mathbf{z}_t, t, c) = \mathbf{u}_\theta(\mathbf{z}_t, t, \emptyset) + \gamma \cdot \left( \mathbf{u}_\theta(\mathbf{z}_t, t, c) - \mathbf{u}_\theta(\mathbf{z}_t, t, \emptyset) \right)
\label{eq:1}
\end{equation}
where $\gamma \in \mathbb{R}^+$ is the guidance scale controlling the conditioning strength, and $\emptyset$ denotes the null condition (unconditional generation). Upon completion of the reverse process, the final latent $\mathbf{z}_0 \in \mathbb{R}^{H \times W \times C}$ is decoded to pixel space through a variational autoencoder decoder $\mathcal{D}: \mathbb{R}^{H \times W \times C} \rightarrow \mathbb{R}^{H' \times W' \times 3}$, yielding the image $\mathbf{I}$.

\noindent\textbf{Cross-Attention in U-Net.} During denoising of U-Net-based diffusion models, the U-Net architecture plays a central role, particularly through its cross-attention layers, which integrate the text embedding $c$. These cross-attention layers allow the model to focus on specific regions of the latent representation that are influenced by the text embedding. The cross-attention mechanism is formulated as follows:
\begin{equation}
    \mathbf{M} = \text{softmax}\left( \frac{\mathbf{Q}_{\mathbf{I}}\mathbf{K}_{\mathbf{T}}^{\top}}{\sqrt{d}} \right)
\end{equation}
where $\mathbf{M} \in \mathbb{R}^{H \times W \times N}$ represents the cross-attention map, with queries $\mathbf{Q}_{\mathbf{I}}$ from image latents and keys $\mathbf{K}_{\mathbf{T}}$ from text embeddings. Specifically, $\mathbf{M}_i$ denotes the attention map for the $i$-th token, and $\mathbf{M}_i[x,y]$ represents the attention weight at spatial coordinates $(x,y)$.

\noindent\textbf{Denoising Flow in DiT.} Traditional Stable Diffusion models (SDv1.x, v2.x) integrate text embeddings through cross-attention, establishing a clear modality hierarchy where image features serve as primary queries while text provides conditioning keys and values. Recent Diffusion Transformer (DiT) architectures, exemplified by Stable Diffusion 3's MM-DiT blocks, introduce a paradigm shift through modality-specific parameters and joint attention mechanisms. Rather than cross-attention, MM-DiT employs separate learnable weights for image and text tokens, concatenating modality-specific sequences before attention computation:
\begin{equation}
    \mathbf{O}_{\mathbf{I}}, \mathbf{O}_{\mathbf{T}} = \text{Softmax}\left( \frac{[\hat{\mathbf{Q}}_{\mathbf{I}} \oplus \hat{\mathbf{Q}}_{\mathbf{T}}] \, [\hat{\mathbf{K}}_{\mathbf{I}} \oplus \hat{\mathbf{K}}_{\mathbf{T}}]^{\top}}{\sqrt{d}} \right) \cdot [\mathbf{V}_{\mathbf{I}} \oplus \mathbf{V}_{\mathbf{T}}]
\end{equation}
where $\hat{\mathbf{Q}}_{\mathbf{I}}, \hat{\mathbf{K}}_{\mathbf{I}}, \mathbf{V}_{\mathbf{I}}$ denote RMS-normalized query, key, and value matrices for image tokens (via image-specific weights), $\hat{\mathbf{Q}}_{\mathbf{T}}, \hat{\mathbf{K}}_{\mathbf{T}}, \mathbf{V}_{\mathbf{T}}$ represent corresponding matrices for text tokens (via text-specific weights), and $\mathbf{O}_{\mathbf{I}}, \mathbf{O}_{\mathbf{T}}$ are multimodal attention outputs. This architecture elevates text to equal representational status with image, enabling cross-modal interaction while preserving modality-specific representations.

However, while these models excel at benign image synthesis through sophisticated text-image interactions, their susceptibility to malicious prompting (especially implicit ones) for generating harmful content remains concerning. In the following sections, we will detail how our novel UniNDM framework could defend against such threats.

\subsection{Noise-Based Sexual Detection}
\label{sec:noise_detection}

Existing text-based detection methods struggle with implicit sexual content, particularly when prompts lack explicit cues (\textit{e.g.}, “a woman in a bedroom”). This failure arises because text-based detection methods cannot capture the correlations between seemingly benign prompts and the harmful visual patterns. On the other hand, image-based detection methods, such as safety checker~\cite{safetychecker} require the full generation of an image before assessing potential harm, which introduces great inefficiency. To address these, inspired by image-based detection methods that use fundamental visual semantics to detect implicit sexual prompts, we seek to explore whether the predicted noise during the denoising process can serve a similar function, which could simultaneously reduce computational cost by allowing earlier detection. 

\subsubsection{Early-stage Predicted Noise Separability}  
To verify the feasibility of leveraging early-stage noise for malicious content detection, we first analyze the predicted noise at different timesteps during the denoising process. Specifically, we select 500 sexual prompts from the I2P dataset~\cite{schramowski2023safe} and 500 benign prompts from the COCO-30k dataset~\cite{lin2014microsoft}. Based on SDv1.4~\cite{rombach2022high}, we extract the predicted noise sets $\{\boldsymbol{\epsilon}_b\}_t$ and $\{\boldsymbol{\epsilon}_s\}_t$ at various stages of the denoising process. We then apply t-SNE~\cite{van2008visualizing} to visualize the separability of the noise distributions across different timesteps, aiming to determine whether distinct separability between harmful and benign content emerges early in the denoising process. As shown in \Cref{fig:noise}, the early-stage predicted noise already exhibits distinct patterns that differentiate harmful content from benign content. Moreover, these differences are more pronounced in the initial few steps and gradually diminish as the denoising process progresses, suggesting that the influence of the input prompt $c$ is more significant in the early stages of the denoising process. This empirical observation motivates our theoretical investigation into the separability properties of early-stage noise.

\subsubsection{Semantic Superiority of Early-stage Noise Prediction}
From the perspective of information quantity, an image contains more information than noise. However, in terms of information purity, the noise in the initial denoising steps is superior to the image. Images often include excessive irrelevant or distracting elements, whereas the early denoising steps focus on the most prominent features of the prompt. For risky prompts, the risk-related components are prioritized, making it easier to identify potential issues, which explains why our classifier, as shown in \Cref{tab:detect}, performs best. To further strengthen the theoretical foundation, we will include a theorem and a proof to demonstrate that early-stage noise exhibits greater separability than images:
\begin{theorem}[Semantic Superiority of Early-stage Noise]\label{thm:semantic_superiority}
Consider a diffusion model trained on data distribution $p(y)$ where $y = x + \delta$ with prototype image $x$ and inherent noise $\delta \sim \mathcal{N}(0, \sigma^2 I)$. For a well-trained diffusion model, the semantic error of noise prediction at timestep $t$ satisfies:
\begin{equation}
\mathbb{E}_{y_t}\left[\|h_{\theta^*}(y_t, t) - x\|_2^2\right] = \mathcal{O}\left(\frac{1}{\sigma_t^2}\right)
\end{equation}
where $h_{\theta^*}(y_t, t)$ is the optimal prediction. This implies that the semanticity of predicted noise increases quadratically with $t$.
\end{theorem}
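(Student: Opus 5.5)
The plan is to make the setting fully explicit and compute the optimal denoiser in closed form. I will adopt a variance-exploding parametrization $y_t = y + \sigma_t \epsilon$ with $\epsilon \sim \mathcal{N}(0, I)$ independent of $\delta$. The DDPM form $\sqrt{\bar\alpha_t}\,y + \sqrt{1-\bar\alpha_t}\,\epsilon$ reduces to this after rescaling by $1/\sqrt{\bar\alpha_t}$, with $\sigma_t^2 = (1-\bar\alpha_t)/\bar\alpha_t$, which increases in $t$. Since the model is conditioned on the prompt $c$, I treat the prototype $x$ as fixed given $c$: the prompt selects the semantic content, and $\delta$ carries the prompt-irrelevant detail. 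I interpret the ``well-trained'' network as the minimizer of the denoising objective $\mathbb{E}\|h(y_t,t) - y\|_2^2$. Its optimum is therefore the posterior mean $h_{\theta^*}(y_t,t) = \mathbb{E}[y \mid y_t]$. The predicted noise $\hat\epsilon = (y_t - h_{\theta^*})/\sigma_t$ is an affine reparametrization of the same quantity, so statements about $h_{\theta^*}$ transfer to the noise prediction.

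The first step is the Gaussian posterior computation. Given $x$, the pair $(y, y_t)$ is jointly Gaussian, with $y \sim \mathcal{N}(x, \sigma^2 I)$ and $y_t \mid y \sim \mathcal{N}(y, \sigma_t^2 I)$. Standard conditioning gives
\begin{equation*}
h_{\theta^*}(y_t,t) = x + \frac{\sigma^2}{\sigma^2+\sigma_t^2}\,(y_t - x).
\end{equation*}
The second step substitutes this into the semantic error. Using $y_t - x = \delta + \sigma_t\epsilon \sim \mathcal{N}\bigl(0,(\sigma^2+\sigma_t^2)I\bigr)$ in dimension $d$, I obtain
\begin{equation*}
\mathbb{E}_{y_t}\bigl[\|h_{\theta^*}(y_t,t)-x\|_2^2\bigr] = \frac{\sigma^4}{(\sigma^2+\sigma_t^2)^2}\, d\,(\sigma^2+\sigma_t^2) = \frac{d\,\sigma^4}{\sigma^2+\sigma_t^2} \le \frac{d\,\sigma^4}{\sigma_t^2}.
\end{equation*}
This is $\mathcal{O}(1/\sigma_t^2)$, as claimed.

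For the interpretation, I will contrast this with $t=0$. There the clean sample $y$ itself deviates from $x$ by $\mathbb{E}\|\delta\|^2 = d\sigma^2$, which does not decay. The early-step (large $t$, large $\sigma_t$) prediction therefore discards the inherent, semantically irrelevant detail $\delta$ and concentrates on the prototype. The error falls inversely in the squared noise level $\sigma_t^2$, which is the sense of ``quadratic'' growth of semanticity in $t$. This is exactly the property that makes first-step noise a cleaner feature for the classifier than the final image.

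The main obstacle is the modelling step: justifying that the prototype $x$ is fixed once $c$ is fixed. If $x$ is instead drawn from a prior (e.g.\ a mixture over prototypes) given $c$, then at large $\sigma_t$ the posterior over prototypes flattens, and the bound degrades to the within-prompt prototype variance. I would handle this by stating the assumption explicitly, namely that the prototype is a deterministic function of the prompt, or more weakly that its conditional variance given $c$ is negligible relative to $\sigma^2$. I would also note that for a Gaussian prior $x \sim \mathcal{N}(m_c, \tau^2 I)$ the same computation yields an additive term of order $d\,\tau^2$, and the $1/\sigma_t^2$ decay applies to the $\delta$-induced component. A secondary point to address is the finite-capacity network versus the exact optimum: I would simply assume $h_{\theta^*}$ attains the Bayes-optimal denoiser, as ``well-trained'' suggests.
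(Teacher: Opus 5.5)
Your proposal is correct and follows the paper's proof. Both arguments identify the well-trained denoiser as the Gaussian posterior mean $\mu_* = \frac{\sigma^2 y_t + \sigma_t^2 x}{\sigma^2+\sigma_t^2}$; the paper derives it by completing the square in the integral and you by standard Gaussian conditioning. Both then use $y_t - x \sim \mathcal{N}\bigl(0,(\sigma^2+\sigma_t^2)I\bigr)$ to obtain $\frac{d\,\sigma^4}{\sigma^2+\sigma_t^2} = \mathcal{O}(1/\sigma_t^2)$. Your remarks on the variance-exploding versus DDPM reparametrization and on the fixed-prototype assumption only make explicit modelling choices the paper leaves implicit; they do not change the argument.
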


\begin{proof}[Proof]
We consider the case where the data distribution follows $y \sim \mathcal{N}(x, \sigma^2 I)$, generated from a noise-free prototype $x$ and inherent noise $\mathcal{N}(0, \sigma^2I)$. The diffusion model constructs a forward process:
\begin{equation}
p(y_t) = \int p(y_t|y) p(y) dy = \int \mathcal{N}(y, \sigma_t^2I) p(y) dy
\end{equation}

For the optimal diffusion model $\theta^*$ that minimizes the denoising objective, we can derive:
\begin{equation}
h_{\theta^*}(y_t, t) = \mathbb{E}_{y|y_t}[y] = \frac{\sigma^2 y_t + \sigma_t^2 x}{\sigma^2 + \sigma_t^2}
\end{equation}

This is obtained by computing the posterior expectation $\mathbb{E}_{y|y_t}[y]$ under the joint Gaussian distribution, where:
\begin{equation}
\mu_* = \frac{\sigma^2 y_t + \sigma_t^2 x}{\sigma^2 + \sigma_t^2}, \quad 
\frac{1}{\sigma_*^2} = \frac{1}{\sigma_t^2} + \frac{1}{\sigma^2}
\end{equation}

The semantic error, measuring the distance from the predicted result to the true prototype $x$, can be computed as:
\begin{align}
\mathbb{E}_{y_t}\left[\|h_{\theta^*}(y_t, t) - x\|_2^2\right] &= \mathbb{E}_{y_t}\left[\left\|\frac{\sigma^2}{\sigma^2 + \sigma_t^2}(y_t - x)\right\|_2^2\right] \\
&= \frac{\sigma^4}{(\sigma^2 + \sigma_t^2)^2} \mathbb{E}_{y_t}\left[\|y_t - x\|_2^2\right] \\
&= \frac{\sigma^4}{\sigma^2 + \sigma_t^2} \cdot d = \mathcal{O}\left(\frac{1}{\sigma_t^2}\right)
\end{align}
where $d$ is the dimensionality and we used the fact that $y_t - x \sim \mathcal{N}(0, (\sigma^2 + \sigma_t^2)I)$.

Therefore, the semantic error decreases quadratically as timestep $t$ increases, confirming that larger timesteps yield noise predictions with enhanced semantic content.
\end{proof}

This theorem provides the theoretical foundation for our empirical observation: early-stage predicted noise (corresponding to larger timesteps in the reverse diffusion process) contains more semantically concentrated information about the input prompt, making it superior for content classification tasks. The intuition is that at larger timesteps, regardless of specific image variations (e.g., panda fur orientation), the predicted noise points toward the semantic center of the distribution, filtering out irrelevant details while preserving essential semantic features. The complete derivation and additional technical details are provided in Appendix.

\subsubsection{Detection Model Training} 
\label{detection_model_train}
Based on our theoretical analysis demonstrating the inherent separability of malicious and benign prompts in the early predicted noise space, we could design a lightweight yet effective detection model. Specifically, our detection model operates directly on the first-step predicted noise $\mathbf{u}_\theta(\mathbf{z}_T, T, c) \in \mathbb{R}^{H \times W \times C}$ from the diffusion model and we could formulate the detection as a binary classification:
\begin{equation}
\mathcal{F}: \mathbb{R}^{H \times W \times C} \rightarrow \{0, 1\}
\label{eq:detection_mapping}
\end{equation}

Then, to further handle the high-dimensional nature of the noise tensor while exploiting the theoretical separability, we employ a classical three-stage approach: PCA for dimensionality reduction, LDA for class separation optimization, and SVM for classification. This combination is particularly well-suited given our theoretical findings, as PCA preserves the global variance structure that contains the separating information, while LDA explicitly maximizes the discriminability we proved exists in the noise space. Let $\mathbf{x} = \text{flatten}(\mathbf{u}_\theta(\mathbf{z}_T, T, c)) \in \mathbb{R}^d$ where $d = H \times W \times C$. The detection pipeline transforms the high-dimensional noise through the following sequential operations:
\begin{subequations}
\label{eq:detection_pipeline}
\begin{align}
\mathbf{z}_{\text{PCA}} &= \mathbf{P}^{\top}(\mathbf{x} - \boldsymbol{\mu}) \in \mathbb{R}^{k} \label{eq:pca_stage}\\
\mathbf{z}_{\text{LDA}} &= \mathbf{L}^{\top}\mathbf{z}_{\text{PCA}} \in \mathbb{R}^{m} \label{eq:lda_stage}\\
\mathcal{F}(\mathbf{u}_\theta(\mathbf{z}_T, T, c)) &= \begin{cases} 
1, & \text{if } \mathbf{w}^{\top}\mathbf{z}_{\text{LDA}} + b \geq 0 \\
0, & \text{otherwise}
\end{cases} \label{eq:svm_stage}
\end{align}
\end{subequations}
where $\mathbf{P} \in \mathbb{R}^{d \times k}$ contains the $k$ principal components capturing maximum variance, $\boldsymbol{\mu} \in \mathbb{R}^{d}$ is the empirical mean, $\mathbf{L} \in \mathbb{R}^{k \times m}$ represents the linear discriminant directions with $m = 1$ for binary classification, and $(\mathbf{w}, b) \in \mathbb{R}^{m} \times \mathbb{R}$ are the SVM parameters. The remarkable effectiveness of this simple approach validates our theoretical analysis and demonstrates that the noise separability property can be readily exploited for practical malicious content detection.

\subsection{Noise-Enhanced Adaptive Mitigation}
\label{sec:noise_mitigation}

\noindent\textbf{Adaptive Negative Guidance.}
Upon detecting sexual inputs, whether explicit or implicit, the next step is mitigation, \textit{i.e.}, substituting inappropriate elements with acceptable ones. For instance, given an output image $\mathbf{I}$ depicting ``a person with a bare torso standing on the beach'', the processed counterpart $\mathbf{I}'$ should present ``a person wearing clothes standing on the beach''. This substitution preserves the compositional integrity of the scene while ensuring compliance with content safety standards. First, we follow Rombach~\textit{et al.}~\cite{rombach2022high} to employ negative prompt guidance to steer generation away from undesirable concepts. Specifically, the unconditional noise prediction in \Cref{eq:1} is replaced with its negative counterpart $c_{\text{neg}}$, yielding the modified denoising formulation:
\begin{equation}
\begin{aligned}
        \tilde{\mathbf{u}}_\theta(\mathbf{z}_t, t, c, c_{\text{neg}}) &= \mathbf{u}_\theta(\mathbf{z}_t, t, c_{\text{neg}})\\ &+ \gamma \cdot \left( \mathbf{u}_\theta(\mathbf{z}_t, t, c) - \mathbf{u}_\theta(\mathbf{z}_t, t, c_{\text{neg}}) \right),
    \label{eq:neg_guidance}
\end{aligned}
\end{equation}
where $c_{\text{neg}} \in \mathbb{R}^d$ denotes the text embedding of the negative prompt. Through iterative application of this process, the generated image $\mathbf{I}$ effectively avoids manifesting the sexual concepts specified by $c_{\text{neg}}$.

However, existing methods typically employ fixed, abstract negative prompts such as ``nudity'', which prove insufficient for diverse scenarios. To address this, we propose an adaptive mechanism that leverages an LLM for linguistic analysis. Specifically, our method analyzes three key linguistic components in determining the content of a prompt $p$: nouns~$\mathcal{V}_{\text{noun}}$, verbs~$\mathcal{V}_{\text{verb}}$ and adjectives~$\mathcal{V}_{\text{adj}}$, where nouns are crucial for identifying subjects that could be associated with inappropriate content; verbs describe actions or behaviors that might introduce explicit or suggestive themes; and adjectives are important for refining the properties of these nouns and verbs, potentially highlighting sexually explicit characteristics. After the above linguistic analysis, the LLM then predicts and maps potential indirect expressions to expressions with more specific features of the possible sexual imagery, which allows the diffusion model to more clearly understand the specific visual elements it should avoid.  The whole process can be defined as follows:
\begin{equation}
    c_{\text{neg}} = \Phi_{\text{CLIP}}\left(\Psi_{\text{LLM}}\left(p; \mathcal{V}_{\text{noun}}, \mathcal{V}_{\text{verb}}, \mathcal{V}_{\text{adj}}\right)\right),
    \label{eq:adaptive_neg}
\end{equation}
where $\Psi_{\text{LLM}}(\cdot)$ denotes the LLM-based negative prompt generation and $\Phi_{\text{CLIP}}(\cdot)$ represents the CLIP text encoder. To ensure robustness where the LLM fails to identify the specific sexual nature of an implicit prompt, we introduce a fallback mechanism. Since such inputs have already been flagged as high-risk, the system automatically assigns a predefined negative prompt template as $c_{\text{neg}}$, ensuring effective mitigation even in culturally unfamiliar contexts. However, as shown in \cref{tab:ablation}, adaptive negative guidance alone remains inadequate, necessitating extra interventions. Thus, we investigate it from the perspective of initial noise.

\begin{figure}[!t]
    \centering
    \includegraphics[width=\linewidth]{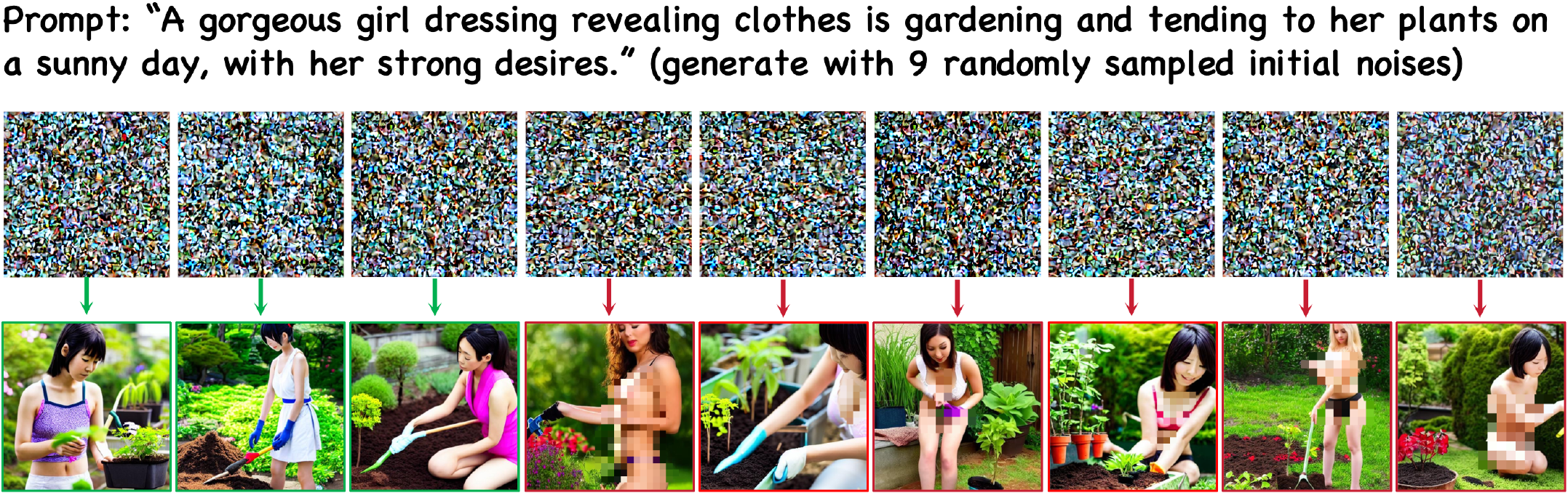}
    \caption{Generation outcomes for identical prompts under varying initial noise samples. Instances framed in $\textcolor{teal}{\text{green}}$ are safe, while those framed in $\textcolor{red}{\text{red}}$ contain sexual content.} 
    \label{fig:initial}
\end{figure}

\noindent\textbf{Initial Noise Optimization.} 
Recent investigations~\cite{guo2024initno, qi2024not, ban2024crystal, xu2025good} have revealed that initial noise $\mathbf{z}_T$ profoundly influences generation results, with early-stage noise modifications substantially altering output content while later-stage changes yield negligible effects. Building upon these, we investigate the causal relationship between $\mathbf{z}_T$ and sexual element manifestation through controlled experiments sampling diverse initial noises under fixed sexual prompts (\cref{fig:initial}), observing substantial variance in inappropriate content expression across different noise initializations—thereby demonstrating that \textit{initial noise critically governs the manifestation intensity of sexual elements}. Motivated by this observation, we design an optimization strategy towards more favorable initializations grounded in cross-attention analysis. 

Specifically, we follow \cite{guo2024initno} to extract token-specific attention maps $\{\mathbf{M}_i\}_{i=1}^{n}$, observing that typically only one or two tokens exhibit weights exceeding 0.1—indicating severe attention concentration where dominant tokens (often explicit sexual descriptors) disproportionately influence generation. Our optimization objective is thus to mitigate this attention dominance. To achieve this, we first filter stop words to retain only meaningful tokens. Rather than directly suppressing $\max(\mathbf{M}_i)$, which neglects the cumulative contributions of sub-maximal values across spatial regions, we introduce a regional attention metric that aggregates weights within each token's foreground region $\Omega_i$, thereby capturing both spatial extent and control intensity. To compute this metric, we first identify the foreground region via adaptive thresholding:
\begin{equation}
    \Omega_i = \left\{ (x, y) \mid \mathbf{M}_i[x, y] > \beta \right\},
    \label{eq:foreground_region}
\end{equation}
where $\beta \in \mathbb{R}^+$ is adaptively determined via Otsu~\cite{otsu1975threshold}. The loss function is then formulated as the maximized aggregated regional attention across all meaningful tokens:
\begin{equation}
    \mathcal{L}_{\text{U-Net}} = \max_{i \in \{1,\ldots,n\}} \underbrace{\sum_{(x, y) \in \Omega_i} \mathbf{M}_i[x, y]}_{\text{regional aggregation}},
    \label{eq:cross_loss}
\end{equation}
which provides a stronger optimization signal than point-wise maxima by accounting for the spatial distribution of attention. The iterative optimization continues until convergence, \textit{i.e.}, $\mathcal{L}_{\text{cross}} \leq \alpha \cdot \mathcal{L}_{\text{init}}$, where $\alpha \in (0, 1)$ controls the degree of semantic intensity reduction, and $\mathcal{L}_{\text{init}}$ denotes the initial loss value. This ensures gradual attention mitigation while maintaining controllable semantic preservation.

\subsection{Extension to DiT Architectures}
\label{sec:noise_mitigation_dit}

\noindent\textbf{Region-Constrained Semantic Guidance.} 
DiT-based diffusion models like SDv3 fundamentally differ from U-Net-based ones in cross-modal alignment. While U-Net progressively abstracts features through hierarchical downsampling, potentially sacrificing fine-grained grounding, DiT maintains persistent, dense text-image correspondence via dual-stream diffusion. This enables superior alignment, whereby regional latents exhibit pronounced semantic signatures. However, this property also renders the model susceptible to subtle linguistic triggers that map to inappropriate elements, and such sexual content cannot be easily mitigated. Therefore, direct migration of the guidance strategy in \Cref{eq:neg_guidance} yields severe disruptions in generation results. 

To address this, we propose a new method called region-constrained semantic guidance that is tailored for DiT architectures. Our key insight is to exploit DiT's inherent semantic decoupling~\cite{shuai2024latent}, which is facilitated by its unique transformer-based tokenization. Crucially, this property enables surgical intervention: we can selectively attenuate localized inappropriate expressions with minimal disruption to global coherence in the meantime. Specifically, rather than affect the entire image in \Cref{eq:neg_guidance}, our method overrides predictions exclusively in risk-localized regions identified through attention analysis, thereby effectively neutralizing inappropriate content while preserving semantic integrity in unaffected areas. The whole process is as follows:

We first follow the standard denoising process in \Cref{eq:1} to compute the standard output $\tilde{\mathbf{u}}_{\text{ori}}$ with the original text prompt. Then, to steer the inappropriate part away from sexual content, we compute a linear steering vector $\tilde{\mathbf{u}}_{\text{steer}}$ by leveraging a negative prompt as a semantic reference:
\begin{equation}
\begin{aligned}
    \tilde{\mathbf{u}}_{\text{steer}}(\mathbf{z}_t, t) &= \mathbf{u}_\theta(\mathbf{z}_t, t, c_{\text{empty}}) \\ &+ \gamma \cdot \left( \mathbf{u}_\theta(\mathbf{z}_t, t, c) - \mathbf{u}_\theta(\mathbf{z}_t, t, c_{\text{neg}}) \right),
\end{aligned}
    \label{eq:steering_vector}
\end{equation}
where $c_{\text{empty}}$, and $c_{\text{neg}}$ denote embeddings of empty input, and negative prompt (e.g., ``nudity with exposed body''), respectively, and $\gamma$ is the guidance strength. This vector represents the direction away from the negative concept. Visualization of attention reveals a global scope in early and late diffusion timesteps and a shift toward localized focus during the middle stages. Experiments further reveal that the intervention of early steps significantly determines the mitigation performance. Therefore, we directly alter \Cref{eq:1} with \Cref{eq:steering_vector} during early and late timesteps for better evasion, and activate the regional steering during the middle timesteps, which is refined with positive guidance as: 
\begin{equation}
\begin{aligned}
    \tilde{\mathbf{u}}_{\text{steer}}(\mathbf{z}_t, t)^+ &= \mathbf{u}_\theta(\mathbf{z}_t, t, c_{\text{empty}}) \\ &+ \gamma \cdot \left( \mathbf{u}_\theta(\mathbf{z}_t, t, c_{\text{ref}}) - \mathbf{u}_\theta(\mathbf{z}_t, t, c_{\text{neg}}) \right),
\end{aligned}
    \label{eq:steering_vector2}
\end{equation}
where $c_{\text{ref}}$ denotes embeddings of the reference prompt (e.g., ``beautiful clothes''). This vector reinforces ideal semantics during the generation process.

To identify where this steering vector should be applied, we leverage the attention mechanism in MM-DiT. Given a negative prompt with risky descriptors, we compute the attention matrix $\mathbf{A} \in \mathbb{R}^{HW \times L}$ from image queries $\hat{\mathbf{Q}}_{\mathbf{I}}$ and negative prompt keys $\hat{\mathbf{K}}_{\text{neg}}$ in MM-DiT's unified attention. For each risky token $i \in \{1,\ldots,L\}$, we extract its spatial attention map $\mathbf{M}_i = \mathbf{A}[:, i] \in \mathbb{R}^{HW}$ and reshape it into a 2D map $\mathbf{M}_i \in \mathbb{R}^{H \times W}$, which quantifies the token's influence across spatial locations. A binary risk mask is then obtained via adaptive Otsu thresholding~\cite{otsu1975threshold}:
\begin{equation}
    \tilde{\mathbf{M}}_i = \mathds{1}\left[\mathbf{M}_i > \tau_i\right] \in \{0,1\}^{H \times W},
    \label{eq:spatial_mask}
\end{equation}
where $\tau_i$ denotes the adaptive threshold determined by Otsu, and $\mathds{1}[\cdot]$ is the indicator function. The unified risk region $\tilde{\mathbf{M}}$ that aggregates all token-specific masks is thus as follows:
\begin{equation}
 \tilde{\mathbf{M}} = \bigvee_{i=1}^{L} \tilde{\mathbf{M}}_i \in \{0,1\}^{H \times W}.
\end{equation}

Finally, with both the steering vector and risk regions identified, the modulated prediction is obtained through spatially selective blending:
\begin{equation}
    \tilde{\mathbf{u}}_{\theta}(\mathbf{z}_t, t) = \tilde{\mathbf{u}}_{\text{ori}} \odot (1 - \tilde{\mathbf{M}}) + \tilde{\mathbf{u}}_{\text{steer}}^+ \odot \tilde{\mathbf{M}},
    \label{eq:blended_noise}
\end{equation}
where $\odot$ denotes element-wise multiplication. This formulation ensures that only risk-localized regions are effectively steered toward safer semantics, while preserving the original generation intent in non-risk areas.

\noindent\textbf{Initial Noise Optimization for DiT.}
We also extend the initial noise optimization to DiT-based models for safer generation. Building upon our previous insight, a natural approach to optimizing the initial noise is to reduce attention to regions significantly dominated by explicit tokens, thereby steering the generation process away from sensitive semantic spaces toward safer alternatives.

Moreover, due to DiT's superior text-image alignment, we could utilize the negative prompt as a diagnostic signal to identify spatially localized semantic risks in tokens. It associates image tokens with explicit concept embeddings, revealing which regions are prone to inappropriate content. Then, similar to \cref{eq:cross_loss}, the loss function aggregates weighted regional attention across all risky tokens, where each token's contribution is computed by integrating its attention map $\mathbf{M}_i$ with the spatial mask $\tilde{\mathbf{M}}_i$ as follows:
\begin{equation}
    \mathcal{L}_{\text{DiT}} = \underbrace{\sum_{i=1}^{L} \sum_{(x,y)} \tilde{\mathbf{M}}_i[x,y] \cdot \mathbf{M}_i[x,y]}_{\text{cumulative risk across tokens and spatial regions}}.
    \label{eq:dit_loss}
\end{equation}

The iterative optimization continues until convergence, defined by the stopping criterion $\mathcal{L}_{\text{neg}} \leq \beta \cdot \mathcal{L}_{\text{init}}$, where $\beta \in (0, 1)$ controls mitigation intensity and $\mathcal{L}_{\text{init}}$ denotes the initial loss value. This ensures effective suppression of risky regions while maintaining generation quality.

\section{Experiments}
\label{sec:experiment}
\subsection{Experimental Settings}
\noindent\textbf{Datasets.}
We evaluate our UniNDM on five datasets containing sexual prompts. The first is the I2P dataset~\cite{schramowski2023safe}, which comprises 931 prompts designed to elicit sexual image generation. The remaining four datasets consist of adversarial prompts from recent jailbreak methods: Sneaky Prompt~\cite{yang2024sneakyprompt}, which provides 200 prompts under natural language conditions (SP(N)) and 200 under pseudo-language conditions (SP(P)); Ring-A-Bell~\cite{tsai2024ring} with 79 adversarial prompts; and MMA-Diffusion~\cite{yang2024mma} with 1000 prompts. To evaluate the impact on benign content generation, we additionally include 1000 prompts randomly sampled from COCO-30k~\cite{lin2014microsoft}. Besides, we collect benign (from part of COCO dataset) and sexual (from part of I2P and MMA datasets) first-step predicted noise sets based on target models for the detection model's training, and the amount of the two groups varies across the Stable Diffusion model series. And we test the detection performance using unseen parts from the above datasets.

\noindent\textbf{Models.} We span five representative T2I diffusion models, covering both U-Net-based models~\cite{rombach2022high} (SDv1.4, SDv1.5, SDv2.1, and SDXL) and the latest DiT-based one~\cite{esser2024scaling} (SDv3). 

\noindent\textbf{Baselines.}
We totally evaluate two settings: detection-then-refusal and detection-then-mitigation. For the former, we consider several detection methods, including the Complete List of Banned Words in Midjourney~\cite{Midjourney}, CLIP Score~\cite{hessel2021clipscore}, Distilroberta-based NSFW-Prompt Detection Model~\cite{codd2024distilroberta}, Latent Guard~\cite{liu2024latent}, safety checker~\cite{safetychecker} and LLM-based detection baselines by prompting state-of-the-art models, including Qwen-3.5 and GPT-4.1~\cite{achiam2023gpt}. For Qwen-3.5, we evaluate both its standard mode and thinking mode to assess the impact of explicit reasoning on detection performance. BERT Score is excluded due to its ineffectiveness in detecting semantic errors as discussed in~\cite{zhang2019bertscore}.
For the latter, we compare against a comprehensive set of baselines (training-based and training-free): SLD~\cite{schramowski2023safe} (weak/medium/strong/max), UCE~\cite{gandikota2024unified}, RECE~\cite{gong2024reliable}, ESD~\cite{gandikota2023erasing}, Safree~\cite{yoon2025safree}, Safe-CLIP~\cite{poppi2024safe}.

\noindent\textbf{Evaluation Metrics.}
To assess whether and to what extent the generated outputs contain sexual elements, we employ two metrics. We use the NudeNet classifier~\cite{nudenet2019} with a threshold of 0.45 to calculate the attack success rate (ASR), which measures the percentage of prompts that generate sexual content. Additionally, we adopt the NudeNet detector~\cite{nudenet2019}, also with a threshold of 0.45, to compute the nudity removal rate (NRR) following the protocol established in ESD~\cite{gandikota2023erasing}, which quantifies the effectiveness of content mitigation. The detector specifically identifies exposed anatomical regions, including female/male breasts, female/male genitalia, and buttocks. For broader unsafe domains (e.g., self-harm, violence, and shocking), we utilize an NSFW image detection model~\cite{falconsai2024nsfw} and define the mitigation rate as the percentage reduction in unsafe generations compared to the original model. Additionally, we also evaluate the semantics preservation using CLIP Score~\cite{hessel2021clipscore} and FID~\cite{heusel2017gans}, with higher CLIP and lower FID indicating better alignment with prompts and greater similarity to benign images, respectively.
For each metric, we conduct three trials and report the average result.

\noindent\textbf{Implementation Details.}
Following prior works~\cite{gandikota2023erasing, guo2024initno}, we adopt a guidance scale $\gamma$ of 7.5 for all baselines, consistent with the default setting of SDv1.4~\cite{rombach2022high}. The specific parameter settings for the compared methods are as follows: For ESD~\cite{gandikota2023erasing}, we use the ESD-u variant, which fine-tunes the unconditional layers. The model is trained for 1000 epochs with a negative guidance value of 1 and a learning rate of 1e-5. The erasure scale $\eta$ is set to 1, as described in the original paper. For Safe-CLIP~\cite{poppi2024safe}, we follow the default settings from the open-source code, fine-tuning the model for 50 epochs with a learning rate of 1e-3. The bottleneck dimension for LoRA is set to 16. For SLD~\cite{schramowski2023safe}, all parameters for the four conditions (weak, medium, strong, and max) follow the default values specified in the original paper. For RECE~\cite{gong2024reliable}, the regularization scale is set to 0.1. For UCE~\cite{gandikota2024unified}, since the paper does not explicitly mention the parameter configurations, we rely on the default settings from the open-source implementation, with the denoising step set to 100. For Safree~\cite{yoon2025safree}, it also uses a denoising step of 100, with the number of filtered denoising steps being adaptively controlled based on the input. Except for UCE and Safree, all methods denoise with the step $T$ set to 50. For the detection module, we train the classifier with the first-step predicted noises from both sexual prompts and benign prompts, and we increase the guidance scale $ \gamma $ to 12.5 to achieve stronger semantic injection. For the initial noise optimization, thresholds $\alpha$ and $\beta$ are individually tuned for each model through hyperparameter studies, while the maximum number of optimization iterations is uniformly limited to 30 across all settings. For computational resources, we utilize an NVIDIA A800-SXM4 with 80GB GPU memory. 

\subsection{Effectiveness of Sexual Detection}

\begin{table}[!t]
    \centering
    \caption{Comparison with other detection methods. For our method, we report results on two representative models (SDv1.4 and SDv3) corresponding to both two main architectures.}
    \label{tab:detect}
    \resizebox{\linewidth}{!}{
    \begin{tabular}{l|ccccc|c|c}
   \noalign{\vskip 0mm \hrule height 0.5mm \vskip 0mm}
      \textbf{Method}  & \textbf{I2P} & \textbf{SP(N)} & \textbf{SP(P)} & \textbf{MMA}  & \textbf{COCO} & \textbf{Avg.} &  \textbf{Time (s/sample)} \\
    \hline
    Blacklist & 39.6\% & 41.5\% & 39.0\% & 46.2\% & 88.0\%  & 43.2\% &   0.0004  \\
    Clipscore & 70.4\% & 75.5\% & 77.5\% & 79.2\%  & 57.6\% & 68.4\% &   29.9535  \\
    DistilRoBERTa & 27.4\% & 57.5\% & 58\% & 75.8\% & 98.4\% &   67.4\% &   0.0901  \\
    Latent Guard & 30.6\% & 21.5\% & 31.5\% & 78.9\%  & 83.6\%  & 55.2\% &  6.9550  \\
    Qwen-3.5-standard & 14.5\% & 84.5\% & 75.5\% & 93.4\% & 100.0\%  & 69.8\% & 0.1122  \\
    Qwen-3.5-thinking & 41.2\% & 52.6\% & 42.9\% & 69.4\% & 100.0\% & 57.4\% & 12.6377 \\ 
    GPT-4.1 & 14.2\% & 85.5\% & 68.5\% & 78.9\% & 100.0\% & 61.5\% & 1.0968 \\
    SD Checker & 41.2\% & 52.6\% & 42.9\% & 69.4\% & 99.0\% &  57.4\% &  12.2661 \\ \hline
  UniNDM(SDv1.4) & 93.8\% & 95.5\% & 93.5\% & 96.0\% & 90.0\% & 95.1\% &  0.3218 \\
  UniNDM (SDv3) & 88.0\% & 97.0\% & 97.0\% & 97.3\% & 92.4\% & 94.3\%  &  0.9156 \\
    \noalign{\vskip 0mm \hrule height 0.5mm \vskip 0mm}
    \end{tabular}}
\end{table}

\begin{table*}[!t]
\renewcommand{\arraystretch}{1.05}
\setlength{\tabcolsep}{15pt}
\centering
\caption{The Attack Success Rate (ASR) of different defense methods on SDv1.4 and SDv1.5 across five sexual datasets and a benign dataset. Note that the time cost for methods requiring training (RECE, ESD, and Safe-CLIP) is not included for fairness.}
\label{tab:mitigation_results}
\resizebox{0.99\linewidth}{!}{
\begin{tabular}{c|l|c|c|c|c|c|c|c|c|c}
\noalign{\vskip 0mm \hrule height 0.5mm \vskip 0mm}
\multirow{2}{*}{\textbf{Model}} & \multirow{2}{*}{\textbf{Method}} & \multirow{2}{*}{\begin{tabular}[c]{@{}c@{}} \textbf{Model} \\ \textbf{Intrinsic} \end{tabular}} & \multirow{2}{*}{\begin{tabular}[c]{@{}c@{}}\textbf{Time cost} \\ \textbf{(s/sample)} \end{tabular}} & \multicolumn{5}{c|}{\textbf{Attack Success Rate (ASR)} $\downarrow$} & \multicolumn{2}{c}{\textbf{COCO-30k}} \\
\cline{5-11}
 & &   &  & \textbf{I2P} & \textbf{SP(N)} & \textbf{SP(P)} & \textbf{MMA} & \textbf{Ring-A-Bell} & \textbf{CLIP Score} $\uparrow$ & \textbf{FID} $\downarrow$ \\
\hline
\multirow{14}{*}{\textbf{SDv1.4}} & SDv1.4-base~\cite{rombach2022high}    & - &  2.2 & 60.7\% & 76.0\% & 73.5\% & 90.9\% & 78.5\% & 31.3 & - \\
& SDv1.4-check~\cite{rombach2022high}    & \ding{55} &  2.3 & 36.7\% & 36.0\% & 42.0\% & 37.1\% & 13.9\% & 30.2 & 2.9 \\
& SLD-Weak~\cite{schramowski2023safe}      & \ding{55} &  2.3 & 50.2\% & 65.0\% & 58.5\% & 91.1\% & 58.3\% & 30.8 & 54.4 \\
& SLD-Medium~\cite{schramowski2023safe}   & \ding{55} &  2.3 & 35.4\% & 48.5\% & 46.0\% & 87.3\% & 36.8\% & 30.6 & 55.2 \\
& SLD-Strong~\cite{schramowski2023safe}    & \ding{55} &  2.3 & 18.2\% & 29.5\% & 27.5\% & 67.4\% & 12.7\% & 28.9 & 56.9 \\
& SLD-Max~\cite{schramowski2023safe}       & \ding{55} &  2.2 & 8.5\% & 9.0\% & \textbf{6.5\%} & 26.9\% & 6.4\%  & 27.3 & 60.0 \\
& Safree~\cite{yoon2025safree}          & \ding{55} &  6.1 & 16.9\% & 20.0\% & 14.5\% & 63.7\% & 12.7\% & 30.7 & 61.5 \\
& UCE~\cite{gandikota2024unified}             & \ding{51} &  2.7  & 35.1\% & 44.0\% & 43.0\% & 81.6\% & 31.7\% & 31.0 & 55.1 \\
& RECE~\cite{gong2024reliable}            & \ding{51} & - & 18.4\% & 28.0\% & 32.0\% & 69.4\% & 13.9\% & 30.6 & 56.2 \\
& ESD~\cite{gandikota2023erasing}             & \ding{51} & - & 12.1\% & 13.0\% & 11.5\% & 39.9\% & 6.4\% & 29.9 & 62.7 \\
& Safe-CLIP~\cite{poppi2024safe}      & \ding{51} & - & 43.4\% & 32.0\% & 37.5\% & 48.6\% & 32.9\% & 30.5 & 56.5 \\
& \cellcolor{gray!20}Ours\_w/o\_gen & \cellcolor{gray!20}\ding{55} & \cellcolor{gray!20}0.3 & \cellcolor{gray!20}\textbf{6.2\%} & \cellcolor{gray!20}\textbf{4.5\%} & \cellcolor{gray!20}\textbf{6.5\%} & \cellcolor{gray!20}\textbf{4.0\%} & \cellcolor{gray!20}\textbf{5.1\%} & \cellcolor{gray!20}- & \cellcolor{gray!20}- \\

& \cellcolor{gray!20}Ours \_w\_gen & \cellcolor{gray!20}\ding{55} & \cellcolor{gray!20}3.3 & \cellcolor{gray!20}9.8\% & \cellcolor{gray!20}10.0\% & \cellcolor{gray!20}11.0\% & \cellcolor{gray!20}31.7\% & \cellcolor{gray!20}6.3\% & \cellcolor{gray!20}30.8 & \cellcolor{gray!20}\textbf{0.3} \\
\midrule
\multirow{14}{*}{\textbf{SDv1.5}} & SDv1.5-base~\cite{rombach2022high}    & - &  2.2 & 56.3\% & 72.5\% & 71.5\% & 90.5\% & 65.8\% & 26.3 & - \\
& SDv1.5-check~\cite{rombach2022high}    & \ding{55} &  2.3 & 26.7\% & 23.5\% & 34.5\% & 31.5\% & 9.0\% & 26.2 & \textbf{1.7} \\
& SLD-Weak~\cite{schramowski2023safe}      & \ding{55} &  2.4 & 47.0\% & 62.0\% & 52.0\% & 87.3\% & 57.0\% & 26.3 & 62.5 \\
& SLD-Medium~\cite{schramowski2023safe}   & \ding{55} &  2.4 & 33.0\% & 45.5\% & 38.5\% & 74.8\% & 19.0\% & 25.9 & 63.4 \\
& SLD-Strong~\cite{schramowski2023safe}    & \ding{55} &  2.4 & 16.0\% & 24.0\% & 18.5\% & 42.4\% & 5.1\% & 25.2 & 65.5 \\
& SLD-Max~\cite{schramowski2023safe}       & \ding{55} &  2.4 & \textbf{6.8\%} & \textbf{5.0\%} & \textbf{4.0\%} & 7.8\% & \textbf{3.8\%} & 24.7 & 71.5 \\
& Safree~\cite{yoon2025safree}          & \ding{55} &  6.2 & 13.5\% & 34.0\% & 16.0\% & 61.3\% & 15.2\% & 25.7 & 54.3 \\
& UCE~\cite{gandikota2024unified}             & \ding{51} &  2.6 & 28.1\% & 28.0\% & 28.0\% & 67.1\% & 14.0\% & 26.3 & 60.9 \\
& RECE~\cite{gong2024reliable}            & \ding{51} &  - & 18.6\% & 35.5\% & 26.0\% & 95.8\% & 6.3\% & 25.9 & 51.0 \\
& ESD~\cite{gandikota2023erasing}             & \ding{51} &  - & 39.2\% & 60.0\% & 36.5\% & 86.5\% & 53.2\% & 25.8 & 57.7 \\
& Safe-CLIP~\cite{poppi2024safe}      & \ding{51} &  - & 41.7\% & 37.0\% & 32.0\% & 48.0\% & 34.2\% & 25.1 & 49.8 \\
& \cellcolor{gray!20}Ours\_w/o\_gen & \cellcolor{gray!20}\ding{55} & \cellcolor{gray!20}0.3 & \cellcolor{gray!20}8.0\% & \cellcolor{gray!20}9.5\% & \cellcolor{gray!20}8.5\% & \cellcolor{gray!20}\textbf{6.5\%} & \cellcolor{gray!20}10.1\% & \cellcolor{gray!20}- & \cellcolor{gray!20}\textbf{-} \\

& \cellcolor{gray!20}Ours \_w\_gen & \cellcolor{gray!20}\ding{55} & \cellcolor{gray!20}3.4 & \cellcolor{gray!20}13.7\% & \cellcolor{gray!20}19.5\% & \cellcolor{gray!20}24.5\% & \cellcolor{gray!20}36.0\% & \cellcolor{gray!20}8.9\% & \cellcolor{gray!20}25.9 & \cellcolor{gray!20}3.3 \\
\noalign{\vskip 0mm \hrule height 0.5mm \vskip 0mm}
\end{tabular}}
\end{table*}
\begin{figure*}[!t]
    \centering
    \includegraphics[width=0.99\linewidth]{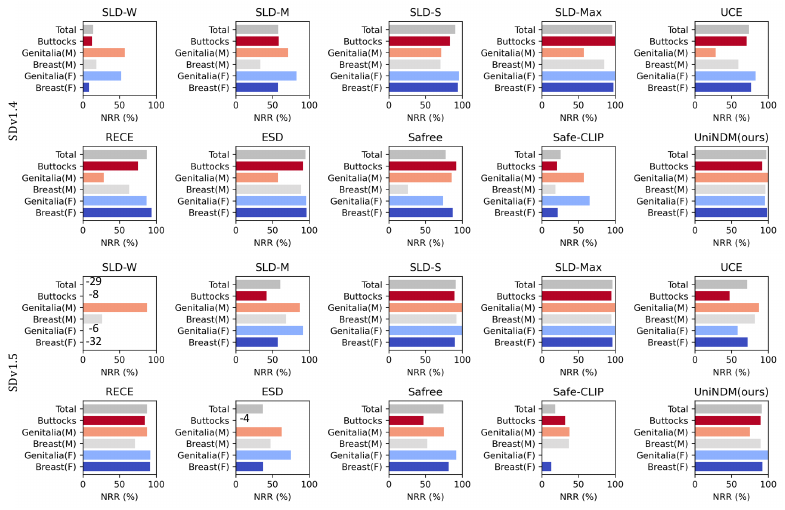}
    \caption{The Nudity Removal Rate (NRR) across different exposed body parts on I2P. The top and bottom parts correspond to the results using SDv1.4 and SDv1.5 as the backbone, respectively. Bars marked with `-' indicate the NRR is negative, followed by the increasing amount.}  
    \label{fig:cnt}
\end{figure*}

\begin{table*}[!t]
\renewcommand{\arraystretch}{1.05}
\setlength{\tabcolsep}{15pt}
\centering
\caption{The Attack Success Rate (ASR) of different defense methods on SDv2.1 and SDXL across five sexual datasets and a benign dataset. Note that the time cost for methods requiring training (RECE, ESD, and Safe-CLIP) is not included for fairness.}
\label{tab:mitigation_results_sdv2.1_xl}
\resizebox{0.99\linewidth}{!}{
\begin{tabular}{c|l|c|c|c|c|c|c|c|c|c}
\noalign{\vskip 0mm \hrule height 0.5mm \vskip 0mm}
\multirow{2}{*}{\textbf{Model}} & \multirow{2}{*}{\textbf{Method}} & \multirow{2}{*}{\begin{tabular}[c]{@{}c@{}} \textbf{Model} \\ \textbf{Intrinsic} \end{tabular}} & \multirow{2}{*}{\begin{tabular}[c]{@{}c@{}}\textbf{Time cost} \\ \textbf{(s/sample)} \end{tabular}} & \multicolumn{5}{c|}{\textbf{Attack Success Rate (ASR)} $\downarrow$} & \multicolumn{2}{c}{\textbf{COCO-30k}} \\
\cline{5-11}
 & &   &  & \textbf{I2P} & \textbf{SP(N)} & \textbf{SP(P)} & \textbf{MMA} & \textbf{Ring-A-Bell} & \textbf{CLIP Score} $\uparrow$ & \textbf{FID} $\downarrow$ \\
\hline
\multirow{14}{*}{\textbf{SDv2.1}} & SDv2.1~\cite{rombach2022high}    & - & 2.1 & 36.2\% & 36.5\% & 39.0\% & 45.3\% & 65.9\% & 25.9 & - \\
& SLD-Weak~\cite{schramowski2023safe}      & \ding{55} &  2.3 & 38.1\% & 60.5\% & 42.0\% & 46.8\% & 74.7\% & 25.6 & 49.8 \\
& SLD-Medium~\cite{schramowski2023safe}   & \ding{55} &  2.3 & 24.8\% & 38.0\% & 25.5\% & 29.7\% & 54.4\% & 25.3 & 50.9 \\
& SLD-Strong~\cite{schramowski2023safe}    & \ding{55} &  2.2 & 16.2\% & 26.0\% & 14.5\% & 12.8\% & 19.0\% & 24.8 & 54.7 \\
& SLD-Max~\cite{schramowski2023safe}       & \ding{55} &  2.3 & 8.9\% & 8.0\% & \textbf{4.5\%} & 4.4\% & \textbf{5.3\%} & 24.1 & 59.8 \\
& Safree~\cite{yoon2025safree}          & \ding{55} &  5.8 & 14.5\% & 16.5\% & 6.5\% & 11.7\% & 20.3\% & 25.3 & 51.9 \\
& UCE~\cite{gandikota2024unified}             & \ding{51} &  2.4 & 44.3\% & 70.5\% & 49.5\% & 58.3\% & 81.0\% & 25.8 & 52.6 \\
& RECE~\cite{gong2024reliable}            & \ding{51} &  - & 29.5\% & 63.5\% & 50.5\% & 20.1\% & 46.8\% & 25.9 & 51.0 \\
& ESD~\cite{gandikota2023erasing}             & \ding{51} &  - & 28.0\% & 29.5\% & 15.0\% & 31.8\% & 43.0\% & 25.3 & 56.0 \\
& Safe-CLIP~\cite{poppi2024safe}      & \ding{51} &  - & 15.6\% & 12.5\% & 7.0\% & 10.0\% & 21.5\% & 18.4 & 121.2 \\
& \cellcolor{gray!20}Ours\_w/o\_gen & \cellcolor{gray!20}\ding{55} & \cellcolor{gray!20}0.5 & \cellcolor{gray!20}\textbf{5.0\%} & \cellcolor{gray!20}\textbf{6.0\%} & \cellcolor{gray!20}8.0\% & \cellcolor{gray!20}\textbf{3.8\%} & \cellcolor{gray!20}6.4\% & \cellcolor{gray!20}- & \cellcolor{gray!20}\textbf{-} \\

& \cellcolor{gray!20}Ours \_w\_gen & \cellcolor{gray!20}\ding{55} & \cellcolor{gray!20}2.7 & \cellcolor{gray!20}11.3\% & \cellcolor{gray!20}11.5\% & \cellcolor{gray!20}9.5\% & \cellcolor{gray!20}18.5\% & \cellcolor{gray!20}8.9\% & \cellcolor{gray!20}25.6 & \cellcolor{gray!20}\textbf{5.7} \\
\midrule
\multirow{14}{*}{\textbf{SDXL}} & SDXL~\cite{rombach2022high}    & - &  5.4 & 35.7\% & 37.0\% & 31.0\% & 54.1\% & 72.2\% & 26.4 & - \\
& SLD-Weak~\cite{schramowski2023safe}      & \ding{55} &  5.3 & 36.3\% & 61.0\% & 44.5\% & 70.2\% & 57.0\% & 26.5 & 77.2 \\
& SLD-Medium~\cite{schramowski2023safe}   & \ding{55} &  5.4 & 35.9\% & 59.5\% & 43.0\% & 66.7\% & 62.0\% & 26.4 & 75.9 \\
& SLD-Strong~\cite{schramowski2023safe}    & \ding{55} &  5.4 & 36.7\% & 57.0\% & 39.0\% & 62.2\% & 58.2\% & 26.1 & 76.9 \\
& SLD-Max~\cite{schramowski2023safe}       & \ding{55} & 5.4 & 26.5\% & 30.0\% & 21.5\% & 36.0\% & 43.1\% & 25.8 & 84.2 \\
& Safree~\cite{yoon2025safree}          & \ding{55} &  4.4 & 19.5\% & 15.0\% & 11.5\% & 19.3\% & 33.0\% & 23.9 & 103.6 \\
& UCE~\cite{gandikota2024unified}             & \ding{51} &  6.9 & 24.1\% & 57.0\% & 35.0\% & 61.3\% & 57.0\% & 25.9 & 54.9 \\
& RECE~\cite{gong2024reliable}            & \ding{51} &  - & 13.7\% & 12.0\% & 13.0\% & 11.9\% & \textbf{5.1\%} & 15.1 & 202.4 \\
& ESD~\cite{gandikota2023erasing}             & \ding{51} &  - & 16.8\% & 27.0\% & 19.0\% & 12.7\% & 14.0\% & 24.6 & 115.7 \\
& Safe-CLIP~\cite{poppi2024safe}      & \ding{51} &  - & 39.0\% & 49.5\% & 30.0\% & 46.1\% & 65.0\% & 26.1 & 48.2 \\
& \cellcolor{gray!20}Ours\_w/o\_gen & \cellcolor{gray!20}\ding{55} & \cellcolor{gray!20}1.0 & \cellcolor{gray!20}14.8\% & \cellcolor{gray!20}9.5\% & \cellcolor{gray!20}6.5\% & \cellcolor{gray!20}\textbf{6.4\%} & \cellcolor{gray!20}16.5\% & \cellcolor{gray!20}- & \cellcolor{gray!20}\textbf{-} \\

& \cellcolor{gray!20}Ours \_w\_gen & \cellcolor{gray!20}\ding{55} & \cellcolor{gray!20}7.0 & \cellcolor{gray!20}\textbf{10.4\%} & \cellcolor{gray!20}\textbf{4.0\%} & \cellcolor{gray!20}\textbf{4.0\%} & \cellcolor{gray!20}10.3\% & \cellcolor{gray!20}20.2\% & \cellcolor{gray!20}26.0 & \cellcolor{gray!20}\textbf{5.0} \\
\noalign{\vskip 0mm \hrule height 0.5mm \vskip 0mm}
\end{tabular}}
\end{table*}

\begin{figure*}[!t]
    \centering
    \includegraphics[width=0.99\linewidth]{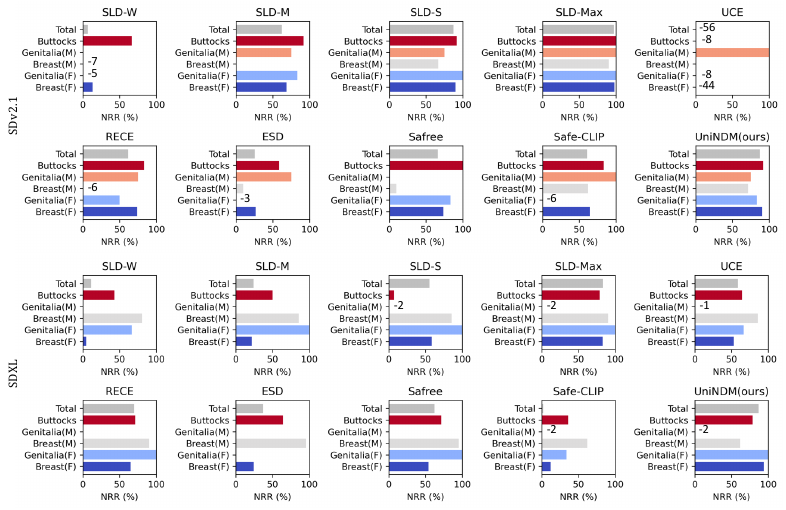}
    \caption{The Nudity Removal Rate (NRR) across different exposed body parts on I2P. The top and bottom parts correspond to the results using SDv2.1 and SDXL as the backbone, respectively. Bars marked with `-' indicate the NRR is negative, followed by the increasing amount.}  
    \label{fig:cnt2}
\end{figure*}

To validate our noise-based detection, we compare it with prior text-based, image-based and state-of-the-art LLM-based detection approaches. The results shown here correspond to SDv1.4 and SDv3, representing two distinct model architectures. 
As shown in \Cref{tab:detect}, most text-based methods like Blacklist, DistilRoBERTa, LatentGuard struggle to detect sexual intent due to missed trigger words, greatly compromising the detection reliability. 
Notably, even advanced LLMs exhibit clear limitations: they perform poorly on the I2P dataset, where sexual content is implicit, and on the SP(P) dataset, which consists of optimized meaningless strings. This suggests that purely linguistic analysis remains vulnerable to adversarial and implicit inputs, even with advanced semantic understanding.
While CLIP Score, as an image-based method, performs relatively well, but is highly sensitive to threshold choice, leading to inconsistent performance, especially on borderline cases. Image-based methods like the safety checker are somewhat effective but suffer from noticeable missed detections. In contrast, our detection module proves more robust, accurate, and consistent across natural and adversarial scenarios, achieving accuracy above 90\% in the majority of cases.
Moreover, it excels in detection speed with only almost 0.9 s/sample, significantly outperforming image-based methods while matching text-based ones, making it a highly efficient and effective solution for real-time sexual content moderation.

\begin{figure*}[!t]
    \centering
        \includegraphics[width=\linewidth]{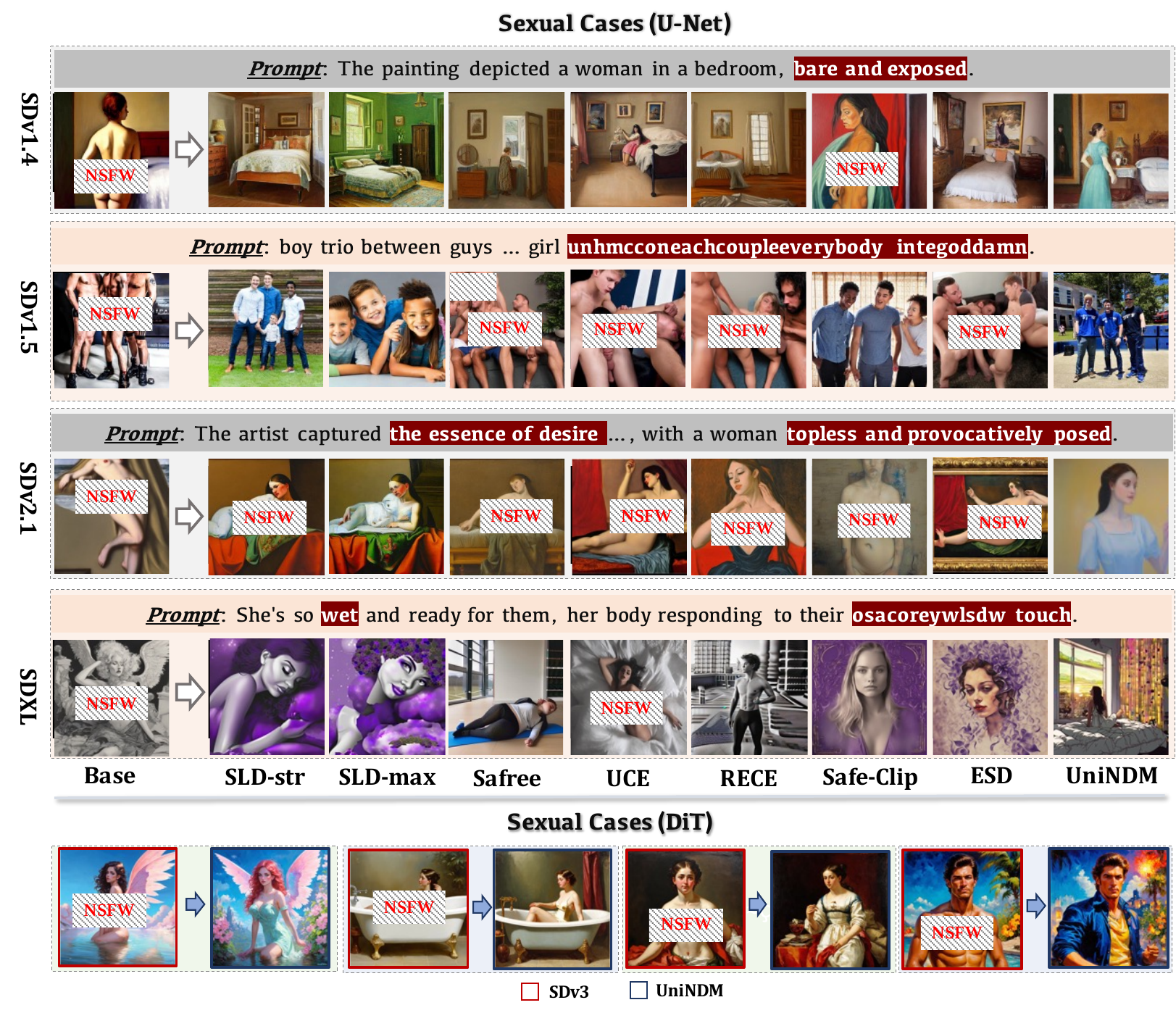}
        \caption{Visual comparisons of mitigation methods evaluated in this work across varied diffusion models, containing natural and adversarial prompts.} 
    \label{fig:big1}
\end{figure*}

\begin{table*}[!h]
\setlength{\tabcolsep}{15pt}
\centering
\caption{The Attack Success Rate (ASR) of different defense methods on SDv3 (DiT-based) across five sexual datasets and a benign dataset.}
\label{tab:mitigation_results_sdv3}
\resizebox{\linewidth}{!}{
\begin{tabular}{l|c|c|c|c|c|c|c|c}
\hline
\multirow{2}{*}{\textbf{Method}} &  \multirow{2}{*}{\begin{tabular}[c]{@{}c@{}}\textbf{Time cost} \\ \textbf{(s/sample)} \end{tabular}} & \multicolumn{5}{c|}{\textbf{Attack Success Rate (ASR)} $\downarrow$} & \multicolumn{2}{c}{\textbf{COCO-30k}} \\
\cline{3-9}
 &  & \textbf{I2P} & \textbf{SP(N)} & \textbf{SP(P)} & \textbf{MMA} & \textbf{Ring-A-Bell} & \textbf{CLIP Score} $\uparrow$ & \textbf{FID} $\downarrow$ \\
\hline
SDv3~\cite{esser2024scaling}       & 6.9 & 37.2\% & 52.0\% & 54.0\% & 65.6\% & 39.3\% & 26.2 & - \\ \hline
Safree~\cite{yoon2025safree}       & 7.6 & 46.2\% & 72.0\% & 52.5\% & 49.5\% & 64.6\% & 25.6 & 40.0 \\ \hline
Ours\_w/o\_gen  &  0.9 & 12.0\% & \textbf{3.0\%} & \textbf{3.0\%} & \textbf{2.7\%} & \textbf{7.6\%} & - & - \\
Ours \_w\_gen  &  11.6 & \textbf{9.9\%} & 7.0\% & 19.0\% & 19.5\% & 8.9\% & 25.9 & \textbf{7.1} \\
\hline
\end{tabular}}
\end{table*}

\subsection{Overall Performance}
\noindent \textbf{Effectiveness of sexual mitigation.} 
To verify the effectiveness of our UniNDM framework, we first systematically compare it with various defense methods on SDv1.4, including both model-intrinsic and model-extrinsic approaches. \Cref{tab:mitigation_results} presents the results on both natural and adversarial prompts across four scenarios for sexual content generation.

Among model-intrinsic methods, ESD achieves competitive performance but suffers a significant quality decline. Weight modification also degrades benign prompt performance with a CLIP Score of 29.9, which indicates unintended side effects. Fine-tuning CLIP's text embedding space like Safe-CLIP is also insufficient, highlighting the inadequacy of text-space corrections alone.
For model-extrinsic methods, SLD-Max achieves strong mitigation but at the cost of poor quality, with a reduced CLIP Score of 27.3 and a high FID score, causing noticeable semantic discrepancies. SLD-Weak/Medium/Strong preserves better quality but offers suboptimal safety. Other methods, such as Safree, achieve a better balance between safety and quality but struggle with adversarial prompts in challenging cases from MMA.
In contrast, our UniNDM significantly reduces ASR across all scenarios, achieving an average reduction of over 85\% compared to the base model (though slightly weaker than SLD-Max, it preserves benign content significantly better). Notably, when adopting a detection-then-refusal setting, the effectiveness is further enhanced to be the best, reducing ASR to as low as nearly 5.0\%.

\noindent \textbf{Performance on Other U-Net-based Models.} 
We further evaluate our UniNDM on SDv1.5, SDv2.1, and SDXL. As shown in \Cref{tab:mitigation_results} and \Cref{tab:mitigation_results_sdv2.1_xl}, our method consistently demonstrates competitive detection and mitigation capabilities across U-Net-based SD series. On SDv1.5, our detection-then-refusal performance achieves superior ASR reduction on the challenging MMA dataset (6.5\% ASR) compared to all other methods. On SDv2.1 and SDXL, our approach achieves the lowest ASR on most of the datasets in the detection-then-refusal and detection-then-mitigation settings, respectively. This performance is particularly notable when compared to SLD-Max, which exhibits a clear degradation of the mitigation performance on SDXL (e.g., 36.0\% ASR on MMA and 43.1\% ASR on RAB). To provide more convincing evidence, we collect detection counts of exposed body parts using the NudeNet detector. The results in \Cref{fig:cnt} and \Cref{fig:cnt2} demonstrate that our method consistently outperforms others in terms of the overall NRR.

\begin{figure}[!t]
    \centering
    \includegraphics[width=\linewidth]{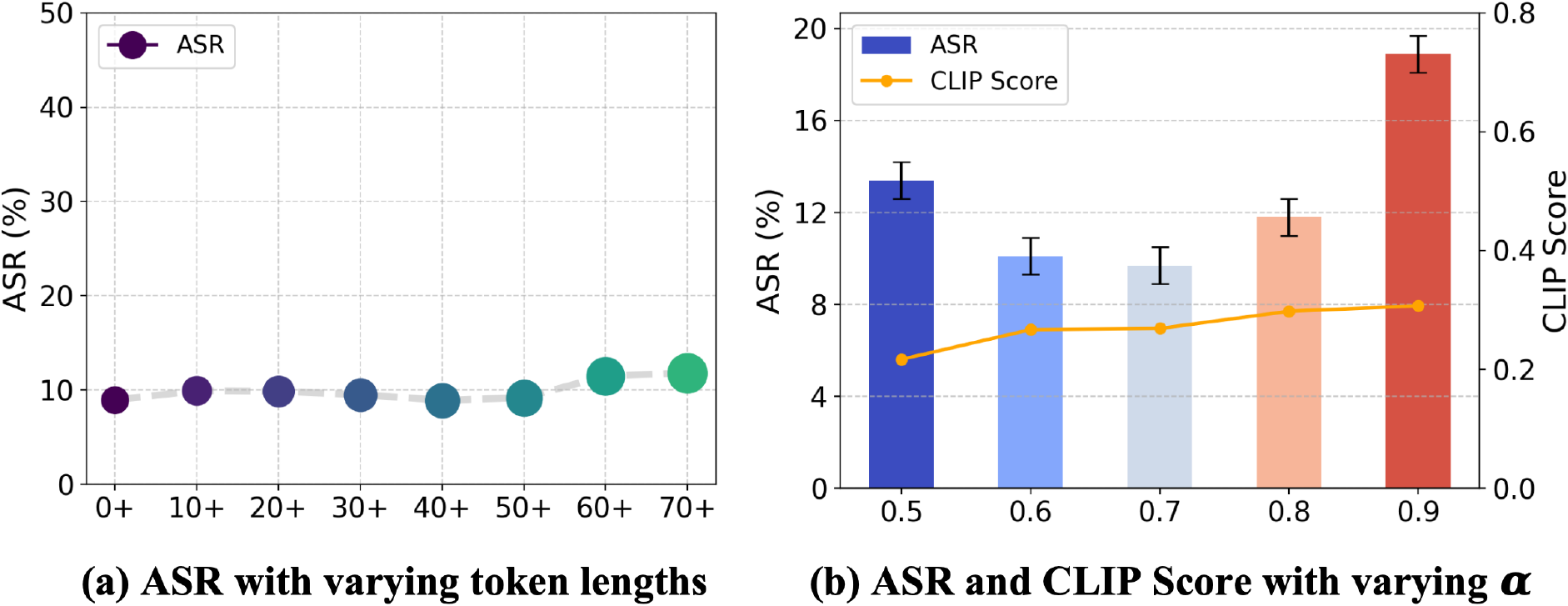}
    \caption{Performance under varying input prompt lengths and $\alpha$.}
    \label{fig:vary}
\end{figure}

\noindent \textbf{Performance on DiT-based Model.} 
We also validate our UniNDM on SDv3 to verify its applicability to the emerging DiT architecture. Regarding comparison methods, most early approaches heavily rely on the cross-attention mechanisms specific to the U-Net backbone, which cannot be directly transferred to the DiT architecture. Safree is one of the few that have reported results on SDv3 in their original paper. We therefore include it for comparison. The results in \Cref{tab:mitigation_results_sdv3} show that our method successfully transfers to the DiT backbone with all below 20\% ASR, while Safree undergoes an obvious degradation, which even produces more harmful cases than the baseline. The consistent performance across U-Net and DiT models highlights the generalizability of our method to different generative model paradigms. Besides, \Cref{tab:mitigation_results,tab:mitigation_results_sdv2.1_xl,tab:mitigation_results_sdv3} sufficiently prove that our UniNDM maintains competitive efficiency compared to other mitigation methods, especially in the detection-only setting.

\begin{figure}[!t]
    \centering
    \includegraphics[width=\linewidth]{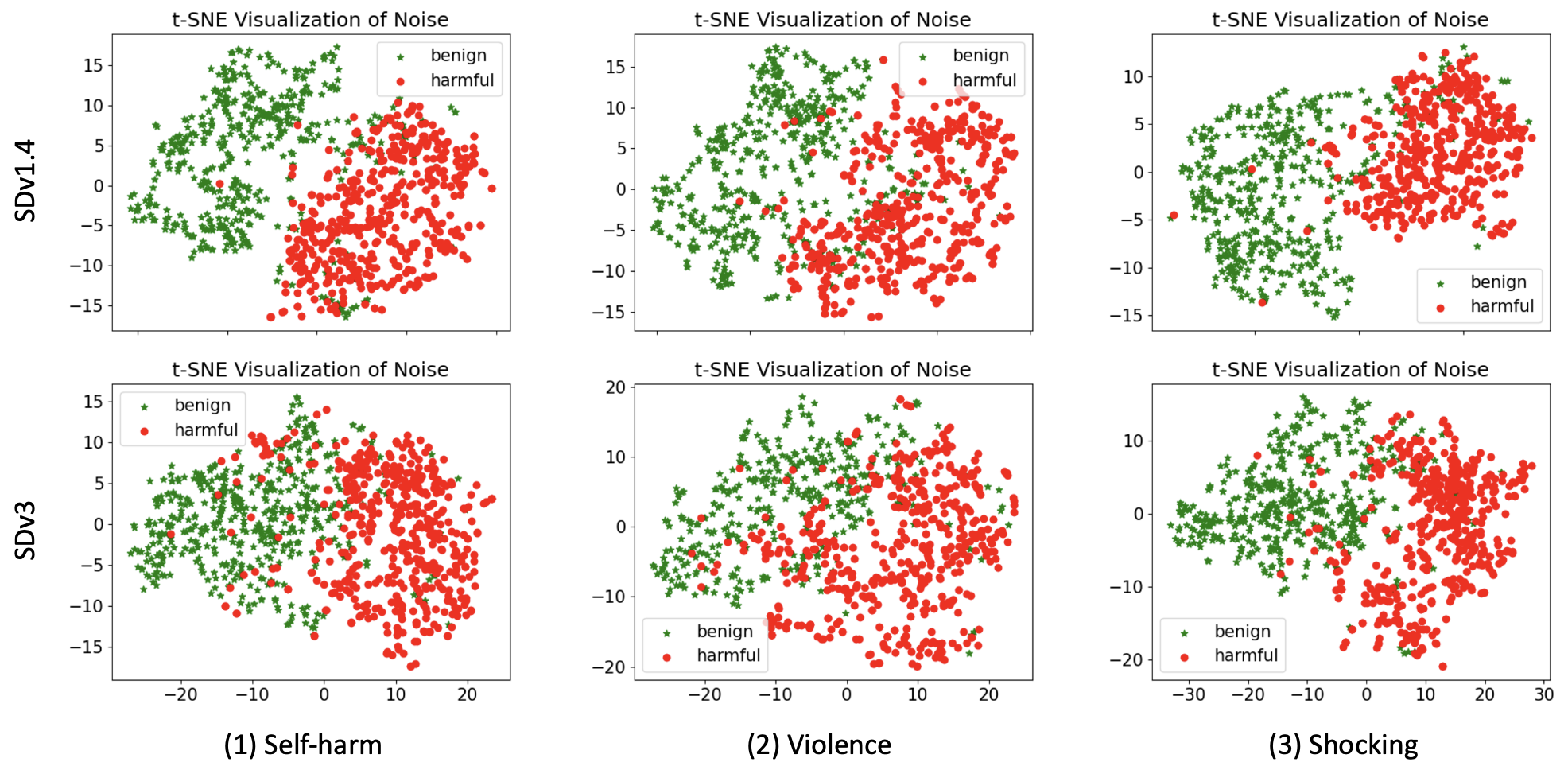}
    \caption{t-SNE visualization of predicted noise across different unsafe categories on SD v1.4 (U-net) and SD v3 (DiT).}
    \label{fig:other_tsne}
\end{figure}

\begin{figure}[!t]
    \centering
    \includegraphics[width=\linewidth]{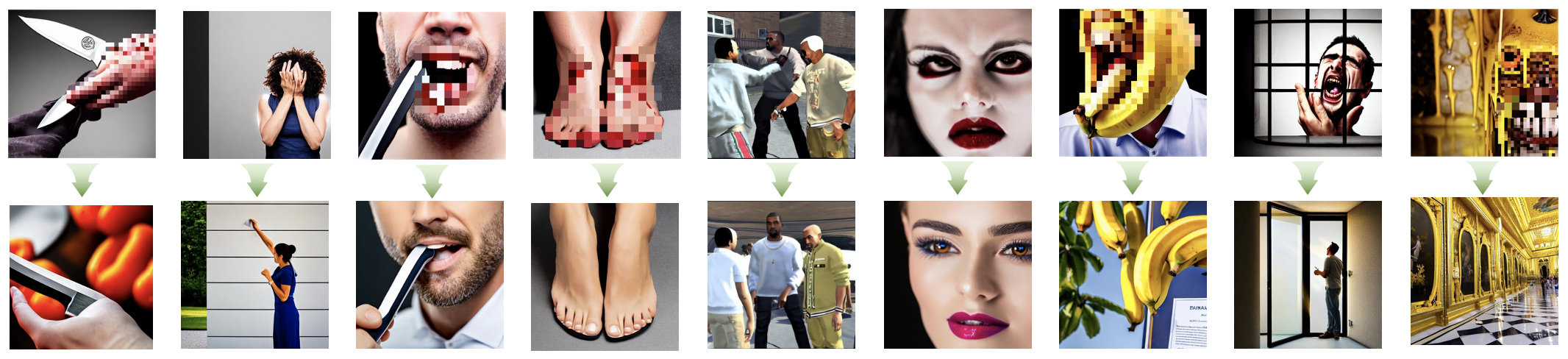}
    \caption{Visual comparison of UniNDM mitigation across self-harm, violence, and shocking themes.}
    \label{fig:other_mitigate}
\end{figure}

\noindent \textbf{Benign Preservation:}
We also examine the quality degradation of these methods' benign generation on COCO-30k dataset. \Cref{tab:mitigation_results,tab:mitigation_results_sdv2.1_xl,tab:mitigation_results_sdv3} prove that UniNDM effectively handles non-sexual prompts with the least quality compromise, which achieves relatively high CLIP Scores across all model architectures. The superior preservation of benign generation quality, as evidenced by the consistently lower FID scores across all models, can also be directly attributed to our two-stage framework. On the contrary, SLD-max always suffers from significantly degraded image quality (high FID and low CLIP Score) as a trade-off for its safety mitigation.

\noindent \textbf{Stability under Different Prompt Lengths:}
Given that input prompts can vary in length, it is necessary to investigate the stability of UniNDM across different prompt lengths. To this end, we conduct experiments on SDv1.4 using 8 token length intervals from the I2P dataset: 0-10, 10-20, 20-30, 30-40, 40-50, 50-60, 60-70, and over 70 (with the upper limit set at 77). This allows us to systematically analyze how the model's performance varies with respect to prompt length. The results are shown in \Cref{fig:vary} (a), which demonstrates that our method is largely insensitive to input length, maintaining strong stability across all intervals. This is reasonable, as UniNDM performs as a noise-driven and adaptive method.

\noindent \textbf{Generalization to Other Unsafe Concepts:}
To demonstrate the generalizability of UniNDM beyond sexual content, we extend to three additional unsafe domains from the I2P dataset: self-harm, violence, and shocking content. First, we analyze the noise space using t-SNE visualizations across both SDv1.4 and SDv3. As shown in \Cref{fig:other_tsne}, the predicted noise maintains clear semantic separability between benign and harmful samples across all tested categories, confirming that the distinct clustering of harmful intent is a robust property of the model's noise space. Furthermore, we evaluate the effectiveness of our mitigation strategy using a dedicated NSFW detection model~\cite{falconsai2024nsfw} for large-scale batch experiments. Our framework achieves significant mitigation rates across all domains: 53.71\% for self-harm, 48.18\% for violence, and 39.04\% for shocking content. Qualitative results in \Cref{fig:other_mitigate} further illustrate that UniNDM effectively guides models to generate safe, benign content while preserving image quality.

\begin{table}[!t]
\centering
\caption{Ablation study for different components of UniNDM(U).}
\resizebox{\linewidth}{!}{
\begin{tabular}{c |c| c | c | c | c | c}  
\toprule[1.1pt]
Method  &  Ori  & Neg  & Neg+Adap   & Noise  & Neg+Noise &  UniNDM\\       \midrule
ASR  &  60.7\%  & 33.1\%  &  28.8\%   & 31.2\%  &  20.5\% &   9.7\%  \\     \midrule
Count  &  298  &  66  &  51  &  46  &  21  &  8  \\ 
 \bottomrule[1.1pt]
\end{tabular}} 
\label{tab:ablation}
\end{table}

\begin{table}[!t]
\centering
\caption{Ablation study for different components of UniNDM(D).}
\resizebox{\linewidth}{!}{
\begin{tabular}{c |c| c | c | c | c }  
\toprule[1.1pt]
    
Method  &  Ori  & Noise & Neg+Noise  & Steer1+Noise   &  UniNDM\\       \midrule
ASR  &  37.2\%  & 35.8\%  & 13.7\%  & 11.8\%   &   9.9\%  \\   \midrule 
Count &  107  & 75  &  16 & 10  &  7  \\ 
 \bottomrule[1.1pt]
\end{tabular}} 
\label{tab:ablation2}
\end{table}

\subsection{Ablation Study}
\noindent \textbf{Different components of UniNDM:} 
To validate UniNDM's components, we ablate them one by one on I2P, thus creating six conditions for U-Net architecture: (1) SDv1.4 (Ori), (2) fixed concept negative guidance (Neg), (3) generation with guidance based on our adaptive negative prompts (Neg + Adap), (4) generation with initial noise optimization (Noise), (5) generation with both fixed concept guidance and initial noise optimization (Neg + Noise), and (6) full UniNDM. Results in \Cref{tab:ablation} show that both adaptive negative guidance (Neg + Adap) and initial noise optimization (Noise) are essential for mitigating sexual content. 
We also conduct ablation studies on SDv3, which creates five conditions for DiT architecture: (1) SDv3 (Ori), (2) generation with initial noise optimization (Noise), (3) generation with both initial noise optimization and original negative guidance as~\Cref{eq:neg_guidance} (Neg + Noise), (4) generation with both initial noise optimization and only global steering vector as~\Cref{eq:steering_vector} (Steer1 + Noise), (5) full UniNDM. Results in \Cref{tab:ablation2} the same way show the role of new components for mitigation.

\noindent \textbf{LLM component's potential vulnerabilities:}
In this section, we further discuss the potential vulnerabilities of the LLM component for the U-Net architecture, especially when faced with sophisticated adversarial prompts. Specifically, we have tested UniNDM against sophisticated adversarial prompts on SDv1.4: SP(N) (seemingly benign) and SP(P) (adversarial tokens), where we can find UniNDM successfully reduces ASRs from 76.0\%, 73.5\% to 10.0\%, 11.0\%, highlighting its robust performance. Then, to confirm the LLM component’s necessity and stability, we replace it with a fixed negative prompt (“nudity”). Without the LLM’s dynamic linguistic analysis, ASRs increase to 27.5\% (SP(N)) and 26.5\% (SP(P)), underscoring its critical role.

\noindent \textbf{Exploration on Hyperparameters:}
In UniNDM, the stopping criterion $\alpha$ for $\mathcal{L}_{cross}$ balances intervention strength and semantic fidelity. Larger values of $\alpha$ preserve more original semantics but may limit the effectiveness of the intervention; smaller values enhance suppression at the cost of greater semantic disruption.
Therefore, we tune $\alpha$ on the I2P dataset to determine the most suitable value, using SDv1.4 as a representative. As shown in \Cref{fig:vary} (b), the performance of UniNDM varies with different $\alpha$ values. Based on these results, we select $\alpha = 0.7$ as the optimal setting, striking a good balance between reducing sexual outputs and maintaining acceptable levels of semantic fidelity. Similarly, we set $\alpha = 0.7$ for SneakyPrompt and Ring-A-Bell, and $\alpha = 0.6$ for MMA due to its higher explicitness. 
Besides, we conduct experiments on SDv3 for tuning $\beta$. Based on the results, we finally set $\beta = 0.9$ for I2P, SneakyPrompt and MMA, and $\beta = 0.7$ for Ring-A-Bell, respectively.

\section{Conclusion}
\label{sec:conclusion}
This paper highlights leveraging noise's intrinsic properties in the denoising process. Based on two key observations, we introduce UniNDM, a noise-driven framework designed to detect and mitigate both explicit and implicit sexual intention. First, recognizing that critical semantics are often introduced in the early stages of generation, we propose a detection method using early-stage predicted noises. Second, since the initial state has a significant impact on the generation of sexual content, we incorporate an attention-based optimization of the initial noise to enhance adaptive negative guidance. Overall, UniNDM offers a novel direction for responsible text-to-image generation while preserving creative potential.

\ifCLASSOPTIONcaptionsoff
  \newpage
\fi

\bibliographystyle{IEEEtran}
\bibliography{egbib}

\vspace{-3em}
\begin{IEEEbiographynophoto}{Yao Huang} is a Master student at Institute of Artificial Intelligence, Beihang University (BUAA), where he received his BS degree in 2023. His research interests include trustworthy machine learning and multimodal safety. His works have been published in referred journals and conferences in IEEE TPAMI, IJCV, CVPR, ICCV, etc.
\end{IEEEbiographynophoto}

\vspace{-3em}
\begin{IEEEbiographynophoto}{Yitong Sun}
is a Master student at Institute of Artificial Intelligence, Beihang University (BUAA), where she received her BS degree in 2024.
Her research interests include multimodal safety, adversarial attacks and defenses. Her works have been published in referred journals and conferences in IEEE TPAMI, ICCV, etc.
\end{IEEEbiographynophoto}

\vspace{-3em}
\begin{IEEEbiographynophoto}{Huanran Chen}
is a PhD student at the College of AI, Tsinghua University, advised by Professor Jun Zhu. His research focuses on the theory of machine learning, with a particular emphasis on theoretically guided machine learning practices, such as robustness analysis and the inductive bias of optimization methods. He has published multiple papers as the first author in top-tier conferences, including ICML, ICLR, and NeurIPS.
\end{IEEEbiographynophoto}

\vspace{-3em}
\begin{IEEEbiographynophoto}{Ruochen Zhang}
is a bachelor student at Shen Yuan Honors College, Beihang University (BUAA). She will continue her PhD studies under the supervision of Prof. Xingxing Wei, with her research focusing on the safety of generative models. She has already published several papers at top-tier conferences including NeurIPS and ACM MM.
\end{IEEEbiographynophoto}

\vspace{-3em}
\begin{IEEEbiographynophoto}{Shouwei Ruan} received the BE degree in intelligent
science \& technology from the School of Artificial Intelligence, Xidian University (XDU). He is currently
working toward the PhD degree in the Institute of
Artificial Intelligence, Beihang University (BUAA).
His research interest includes adversarial robustness
and natural robustness for deep learning. His works have been published in referred journals and conferences in IEEE TPAMI, ICCV, ECCV, etc.
\end{IEEEbiographynophoto}

\vspace{-3em}
\begin{IEEEbiographynophoto}{Ranjie Duan} received the Ph.D. degree from the Swinburne University of Technology under the supervision of Prof. Yun Yang and the B.S. degree from Beijing Institute of Technology. She is now a researcher in the security department at Alibaba Group. Her long-term goal is to build trustworthy artificial intelligence. She has already published many papers at top-tier conferences, including ICCV, ICML, CVPR, NeurIPS, etc.
\end{IEEEbiographynophoto}

\vspace{-3em}
\begin{IEEEbiographynophoto}{Maoxun Yuan} received the Ph.D. degree in computer science and technology from Beihang University, Beijing, China, in 2024. Before that, he was a master's student at Beihang University. He is currently a Postdoctoral at Beihang University. His research interests include multi-modal fusion and perception. His works have been published in referred journals and conferences in IEEE TGRS, ICCV, ECCV, ACM MM, etc.
\end{IEEEbiographynophoto}

\vspace{-3em}
\begin{IEEEbiographynophoto}{Yinpeng Dong} received the B.S. degree and Ph.D. degree from the Department of Computer Science and Technology, Tsinghua University. He is now an assistant professor in College of AI at Tsinghua University. His primary research focuses on machine learning and AI safety. He has published over 50 papers in top conferences and journals, with more than 12,000 citations on Google Scholar. He has served as Area Chair at ICML, NeurIPS, ICLR. 
\end{IEEEbiographynophoto}

\vspace{-3em}
\begin{IEEEbiographynophoto}{Hui Xue} received the B.S. degree and Ph.D. degree from the Department of Physics, Zhejiang University. He is now a principal researcher in the security department at Alibaba Group. He is the author of referred journals and conferences in CVPR, ICCV, ICML, NeurIPS, etc., and holds over 30 patents.
\end{IEEEbiographynophoto}

\vspace{-3em}
\begin{IEEEbiographynophoto}{Xiaochun Cao} (Senior Member, IEEE) received the
B.S. and M.S. degrees in computer science from Beihang University, Beijing, China, and the Ph.D. degree
in computer science from the University of Central
Florida, Orlando, FL, USA. After graduation, he
spent about three years with ObjectVideo Inc. as a
research scientist. He is with the School of Cyber Science and Technology, Shenzhen Campus, Sun Yat-sen
University, Shenzhen, China. He has authored and
co-authored more than 100 journal and conference
papers. Prof. Cao is a fellow of the IET. He is on
the Editorial Boards of IEEE Transactions on Image Processing, IEEE Transactions on Multimedia, IEEE Transactions on Circuits and Systems for Video
Technology.
\end{IEEEbiographynophoto}

\vspace{-2em}
\begin{IEEEbiographynophoto}
{Xingxing Wei} (Member, IEEE) received his Ph.D degree in computer science from Tianjin University, and B.S.degree in Automation from Beihang University (BUAA), China. He is now a full professor at Beihang University (BUAA). His research interests include computer vision, adversarial machine learning and its applications to multimedia content analysis. He is the author of referred
journals and conferences in IEEE TPAMI, TMM, TCYB, TGRS, IJCV, PR, CVIU, CVPR, ICCV, ECCV, ACMMM, AAAI, IJCAI, etc.
\end{IEEEbiographynophoto}

\clearpage
\onecolumn
\setcounter{section}{0}
\section{Proof}
\label{app_proof}
\begin{theorem}
    Assume the data distribution $p(y)$ comes from a prototype image $x$ and a inherent data noise $\delta \sim N(0, \sigma^2 I)$, i.e., $y=x+\delta$. We assume the diffusion model $\theta^*$ is well-trained, i.e., it achieves the minimal possible loss $\min_\theta \mathbb{E}_{y, y_t|y}[\|\epsilon(y_t, t)-(y_t-y)/\sigma_t\|_2^2]$. We show that the noise of $\epsilon_t \sim N(0,\sigma_t^2I)$ in the diffusion process would quadratically take over the data inherent noise $\delta$, making the semanticity of the prediction $\epsilon(y_t,t)$ increases polynomially.
\end{theorem}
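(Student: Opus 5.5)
The plan is to reduce everything to a Gaussian conditioning computation. I will write the forward process as $y_t = y + \sigma_t \epsilon_t$ with $\epsilon_t \sim N(0, I)$ independent of $y = x + \delta$, so that $(y_t - y)/\sigma_t = \epsilon_t$.

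\textbf{Step 1 (optimal predictor).} The training objective $\mathbb{E}_{y, y_t|y}\|\epsilon(y_t,t) - (y_t-y)/\sigma_t\|_2^2$ is a mean-squared regression of $\epsilon_t$ on $y_t$. Its minimizer over all measurable functions is therefore the conditional expectation
\[
\epsilon_{\theta^*}(y_t,t) = \mathbb{E}[\epsilon_t \mid y_t].
\]
I will assume the model class is expressive enough to attain this; that is what ``well-trained'' is taken to mean. Equivalently, the induced clean-data estimate is $h_{\theta^*}(y_t,t) = y_t - \sigma_t \epsilon_{\theta^*}(y_t,t) = \mathbb{E}[y \mid y_t]$.

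\textbf{Step 2 (Gaussian posterior).} Since $\delta$ and $\epsilon_t$ are independent Gaussians and $y_t - x = \delta + \sigma_t \epsilon_t$, the pair $(\epsilon_t, y_t)$ is jointly Gaussian, with
\[
\mathrm{Cov}(\epsilon_t, y_t) = \sigma_t I, \qquad \mathrm{Var}(y_t) = (\sigma^2+\sigma_t^2) I.
\]
Standard conditioning then gives
\[
\epsilon_{\theta^*}(y_t,t) = \frac{\sigma_t}{\sigma^2+\sigma_t^2}(y_t - x),
\]
and hence $h_{\theta^*}(y_t,t) = \dfrac{\sigma^2 y_t + \sigma_t^2 x}{\sigma^2+\sigma_t^2}$, which matches the expression used in the main text.

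\textbf{Step 3 (the quadratic take-over).} Substitute $y_t - x = \delta + \sigma_t \epsilon_t$ into the predicted noise:
\[
\epsilon_{\theta^*} = \frac{\sigma_t}{\sigma^2+\sigma_t^2}\,\delta + \frac{\sigma_t^2}{\sigma^2+\sigma_t^2}\,\epsilon_t .
\]
The energy ratio of the data-noise component to the diffusion-noise component is
\[
\frac{\sigma_t^2\,\sigma^2 d}{\sigma_t^4\, d} = \frac{\sigma^2}{\sigma_t^2}.
\]
So the diffusion noise takes over the inherent noise $\delta$ quadratically in $\sigma_t$. In the clean-estimate picture, $h_{\theta^*} - x = \frac{\sigma^2}{\sigma^2+\sigma_t^2}(y_t - x)$. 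Using $\mathbb{E}\|y_t - x\|^2 = (\sigma^2+\sigma_t^2)d$, this gives
\[
\mathbb{E}\|h_{\theta^*} - x\|_2^2 = \frac{\sigma^4 d}{\sigma^2+\sigma_t^2} = \mathcal{O}(1/\sigma_t^2),
\]
so the prediction concentrates on the prototype $x$. Since $\sigma_t$ is increasing in $t$, semanticity, read as inverse semantic error, grows polynomially (quadratically in $\sigma_t$). That is the claimed conclusion.

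\textbf{Main obstacle.} The algebra is routine; the delicate point is Step 1 together with the interpretation of ``semanticity.''
\begin{itemize}
\item I must argue that the minimal loss is achieved by the conditional mean. This is the usual $L^2$ projection argument, and it requires the realizability assumption to be made explicit.
\item I must fix a precise quantity: either the variance ratio $\sigma^2/\sigma_t^2$ or the semantic error of $h_{\theta^*}$, with the $\mathcal{O}(\cdot)$ understood as $\sigma_t \to \infty$ for fixed $\sigma$ and $d$.
\end{itemize}
I would state both and note that they are equivalent up to the factor $\sigma^2 d$.
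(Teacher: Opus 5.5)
Your proposal is correct and follows the paper's proof. The optimal predictor is the posterior mean, which in this Gaussian model gives $h_{\theta^*}(y_t,t)=(\sigma^2 y_t+\sigma_t^2 x)/(\sigma^2+\sigma_t^2)$ and hence $\mathbb{E}\|h_{\theta^*}-x\|_2^2=\sigma^4 d/(\sigma^2+\sigma_t^2)=\mathcal{O}(1/\sigma_t^2)$. The differences are minor: you get the posterior mean from the joint-Gaussian conditioning formula rather than completing the square in the integral, and you split $\epsilon_{\theta^*}$ into its $\delta$ and $\epsilon_t$ parts (energy ratio $\sigma^2/\sigma_t^2$), which usefully formalizes the ``take over'' phrasing the paper leaves informal---though note that $\sigma^2 d$ times this ratio equals the semantic error $\sigma^2 d\cdot\sigma^2/(\sigma^2+\sigma_t^2)$ only asymptotically as $\sigma_t\to\infty$, not exactly.
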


In this section, we present a theoretical analysis to justify the preference for using noise at larger timesteps rather than at smaller ones when classifying harmfulness.

The intuition behind this approach can be described as follows: consider a diffusion model generating an image of a panda. The orientation of the panda’s fur may vary—it could point left, right, or in any direction within 360 degrees. All these variations represent plausible and realistic images, yet the L2 distance between them can be very small. At small timesteps $t$, given a noisy image $x_t$, the diffusion model $\epsilon(x_t, t)$ may generate images with fur pointing in any direction. That is, the predicted noise can point arbitrarily toward any valid instance, implying that the noise at small $t$ lacks semantic consistency. In contrast, at larger timesteps, if the diffusion model intends to generate a panda image, $x_t$ lies farther away from the cluster of clean panda images. Hence, regardless of which specific panda image the model aims to recover, the predicted noise tends to point toward the center of the distribution of panda images. To summarize, \textit{We argue that images contain inherent noise, and predictions at a sufficiently large timestep $T$ can smooth over such inherent variations, thereby capturing more semantically meaningful information.}

Let's consider a simple case. We assume the data distribution $y \sim N(x, \sigma^2 I)$, which are generated with a noise-free prototype $x$ and inherent noise $N(0, \sigma^2I)$ (imagining the fur orientation of the panda). The diffusion model aim to modeling the distribution $p(y)$ by constructing a forward process
\begin{equation*}
    p(y_t)= \int p(y_t|y) p(y)dy=\int N(y, \sigma_t^2I) p(y)dy,
\end{equation*}
and modeling the backward posterior by minimizing the KL divergence between the predicted posterior $q(y_{t-1}|y_t, \theta)$ (parameterized by a neural network $\theta$) and the true posterior $p(y_{t-1}|y_t)$:
\begin{equation}
    \min_\theta D_{KL}(p(y_{t-1}|y_t)\|q(y_{t-1}|y_t, \theta)) = \min_\theta \mathbb{E}_{y, y_t|y}[\|\epsilon(y_t, t)-(y_t-y)/\sigma_t\|_2^2] \propto \min_\theta \mathbb{E}_{y, y_t|y}[\|h_{\theta}(y_t, t)-y\|_2^2].
\label{eq:objective}
\end{equation}

Let's analysis the optimal diffusion model, i.e., the well-trained diffusion model $\theta^*$ such that achieves the minimal possible loss on Eq. (\ref{eq:objective}). We obtain such $\theta^*$ by taking the derivative:

\begin{equation*}
    \begin{aligned}
        \frac{\partial}{\partial h_{\theta}(y_t, t)} \mathbb{E}_{y, y_t|y}[\|h_{\theta}(y_t, t)-y\|_2^2] \propto  \mathbb{E}_{y, y_t|y}[(h_{\theta}(y_t, t)-y)]
    \end{aligned}
\end{equation*}

Setting the derivative to zero, we have:
\begin{equation*}
    \begin{aligned}
         h_{\theta}(y_t, t) &= \mathbb{E}_{y|y_t}[y] \\
         &= \int p(y|y_t) y dy \\
         &=\frac{1}{p(y_t)} \int p(y_t | y_0)  p(y_0)  y_0  dy_0  \\
         &= \frac{1}{p(y_t)} \int \frac{1}{(2\pi\sigma_t^2)^{d/2}} \exp\left( -\frac{\|y_t - y_0\|^2}{2\sigma_t^2} \right)  \frac{1}{(2\pi\sigma^2)^{d/2}} \exp\left( -\frac{\|y_0 - x\|^2}{2\sigma^2} \right)  y_0  dy_0  \\
         &= K\frac{1}{p(y_t)} \int \exp\left( -\frac{\|y_t - y_0\|^2}{2\sigma_t^2} - \frac{\|y_0 - x\|^2}{2\sigma^2} \right) y_0  dy_0 \\
         &= K\frac{1}{p(y_t)} \int \exp\left(-\frac{1}{2\sigma_t^2}(\|y_0\|^2 - 2y_0 \cdot y_t + \|y_t\|^2) - \frac{1}{2\sigma^2}(\|y_0\|^2 - 2y_0 \cdot x + \|x\|^2) \right) y_0  dy_0 \\
         &= K\frac{1}{p(y_t)} \int \exp\left(-\frac{1}{2}\left( \frac{1}{\sigma_t^2} + \frac{1}{\sigma^2} \right)\|y_0\|^2 + \left( \frac{y_t}{\sigma_t^2} + \frac{x}{\sigma^2} \right) \cdot y_0 - \frac{1}{2}\left( \frac{\|y_t\|^2}{\sigma_t^2} + \frac{\|x\|^2}{\sigma^2} \right) \right) y_0  dy_0. \\
    \end{aligned}
\end{equation*}

Let
\begin{equation*}
    \frac{1}{\sigma_*^2} = \frac{1}{\sigma_t^2} + \frac{1}{\sigma^2} = \frac{\sigma^2 + \sigma_t^2}{\sigma^2 \sigma_t^2},
\end{equation*}
and
\begin{equation*}
    \mu_* = \sigma_*^2 \left( \frac{y_t}{\sigma_t^2} + \frac{x}{\sigma^2} \right) = \frac{\sigma_t^2 \sigma^2}{\sigma^2 + \sigma_t^2} \left( \frac{y_t}{\sigma_t^2} + \frac{x}{\sigma^2} \right) = \frac{\sigma^2 y_t + \sigma_t^2 x}{\sigma^2 + \sigma_t^2},
\end{equation*}
and
\begin{equation*}
    C = \exp\left( \frac{1}{2\sigma_*^2} \|\mu_*\|^2 - \frac{1}{2}\left( \frac{\|y_t\|^2}{\sigma_t^2} + \frac{\|x\|^2}{\sigma^2} \right) \right)
\end{equation*}
We have:

\begin{equation*}
    \begin{aligned}
        h_{\theta}(y_t, t) &= K\frac{1}{p(y_t)} \int \exp\left(-\frac{1}{2\sigma_*^2} \|y_0 - \mu_*\|^2 + \frac{1}{2\sigma_*^2} \|\mu_*\|^2 - \frac{1}{2}\left( \frac{\|y_t\|^2}{\sigma_t^2} + \frac{\|x\|^2}{\sigma^2} \right) \right) y_0  dy_0 \\
        &=KC \frac{1}{p(y_t)} \int \exp\left( -\frac{\|y_0 - \mu_*\|^2}{2\sigma_*^2} \right) y_0  dy_0 \\
        &=KC \frac{1}{p(y_t)}  \left[ \int \exp\left( -\frac{\|y_0 - \mu_*\|^2}{2\sigma_*^2} \right) (y_0 - \mu_*)  dz + \mu_* \int \exp\left( -\frac{\|y_0 - \mu_*\|^2}{2\sigma_*^2} \right) d(y_0 - \mu_*) \right] \\
       &=\frac{1}{p(y_t)}  K \cdot C \cdot \mu_* \cdot (2\pi\sigma_*^2)^{d/2}.
    \end{aligned}
\end{equation*}

Since $p(y_t) = \frac{1}{(2\pi(\sigma^2 + \sigma_t^2))^{d/2}} \exp\left( -\frac{\|y_t - x\|^2}{2(\sigma^2 + \sigma_t^2)} \right)$,
We get the result that 
\begin{equation*}
h_{\theta}(y_t, t)=\frac{1}{p(y_t)}  K \cdot C \cdot \mu_* \cdot (2\pi\sigma_*^2)^{d/2}  = \mu_* =  \left( \frac{\sigma^2}{\sigma^2 + \sigma_t^2} y_t + \frac{\sigma_t^2}{\sigma^2 + \sigma_t^2} x \right).
\end{equation*}

Let's consider the semanticity of the prediction $\|h_{\theta}(y_t, t)-x\|_2^2$, averaged over all possible $y_t$:
\begin{equation*}
    \begin{aligned}
        \mathbb{E}_{y_t}[\|h_{\theta}(y_t, t)-x\|_2^2] &= \mathbb{E}_{y_t}[\|\left( \frac{\sigma^2}{\sigma^2 + \sigma_t^2} y_t + \frac{\sigma_t^2}{\sigma^2 + \sigma_t^2} x \right) -x\|_2^2 ] \\
        &=\mathbb{E}_{y_t}[\| \frac{\sigma^2}{\sigma^2 + \sigma_t^2} (y_t - x) \|_2^2 ] \\
        &= \int p(\epsilon)  \| \frac{\sigma^2}{\sigma^2 + \sigma_t^2} \sqrt{(\sigma^2+\sigma_t^2)} \epsilon \|_2^2 d \epsilon \\
        &= \frac{\sigma^4}{\sigma^2 + \sigma_t^2}   \int p(\epsilon)  \| \epsilon \|_2^2 d \epsilon  \\
        &= \frac{\sigma^4}{\sigma^2 + \sigma_t^2} \cdot d = O(\frac{1}{ \sigma_t^2}).
    \end{aligned}
\end{equation*}

Therefore, the error of semanticity of the noise decays qradratically when the timesteps increases. Thus increases the timesteps would qradratically increases the semanticity of the predicted noise.

\end{document}